\documentclass[11pt]{article}
\pagestyle{plain}
\headsep=4mm \headheight=4mm \topmargin=0cm
\oddsidemargin=30pt\evensidemargin=30pt \textheight=220mm
\textwidth=148mm  

\usepackage{float}
\usepackage{amsmath}
\usepackage{amsthm}
\usepackage{amssymb}
 \usepackage{mathrsfs}
\usepackage{pstricks}
\usepackage{url}
\usepackage{subfig}
\usepackage{enumerate}
\usepackage{graphicx}
\usepackage{comment}
\usepackage{url}
\usepackage[colorlinks,linkcolor=black,citecolor=blue,urlcolor=blue]{hyperref}
\usepackage[pagewise]{lineno}

\usepackage{tikz}
\newtheorem{defn}{Definition}
\newtheorem{thm}{Theorem}
\newtheorem{lem}{Lemma}
\newtheorem{exmp}{Example}

\newtheorem{prop}{Proposition}

\begin{document}

\title{On SCC-recursiveness in Quantitative Argumentation}

\author{Zongshun Wang, 
Yuping Shen$^*$\\
Institute of Logic and Cognition,\\ 
Department of Philosophy, \\
Sun Yat-sen University, Guangzhou, China\\
E-mail: wangzsh7@mail2.sysu.edu.cn; shyping@mail.sysu.edu.cn
}
  \date{}
\maketitle

\begin{abstract}

\emph{Abstract argumentation} is a reasoning model for evaluating arguments based on various \emph{semantics}. 
\emph{SCC-recursiveness} is a sophisticated property of semantics that provides a general schema for characterizing semantics through the decomposition along \emph{strongly connected components} (SCCs). 
While this property has been extensively explored in various qualitative frameworks, it has been relatively neglected in quantitative argumentation. 
To fill this gap, we demonstrate that this property is well-suited to \emph{fuzzy extension semantics}, which is a quantitative generalization of classical semantics in \emph{fuzzy argumentation frameworks} (FAF).
We tailor the SCC-recursive schema to enable the characterization of fuzzy extension semantics through the recursive decomposition of an FAF along its SCCs.
Our contributions are twofold.
Theoretically, we show that SCC-recursiveness provides an alternative approach to characterize fuzzy extension semantics, offering a deep understanding and better insight into these semantics.
Practically, our schema provides a sound and complete algorithm for computing fuzzy extension semantics, which naturally reduces computational efforts when dealing with a large number of SCCs.

Keywords: Quantitative argumentation; SCC-recursiveness; Semantics
\end{abstract}

\section{Introduction}
Argumentation serves as a process for reasoning and decision-making in conflict situations, garnering significant attention in the field of Artificial Intelligence \cite{bench2007argumentation,rahwan2009argumentation}.
Its applications span diverse domains, including reasoning with inconsistent information \cite{simari1992mathematical}, decision making \cite{Amgoud2009using}, explainable AI \cite{Cyras2021XAI},  etc.

Dung's seminal work on \textit{argumentation framework} \cite{dung1995acceptability} (AF) is a well-studied formalism in argumentation theory.
It abstracts argumentation scenarios as a directed graph where nodes represent arguments and arrows represent attacks among arguments. 
In recent years, Dung's AF has been extensively explored in quantitative settings, leading to significant developments such as Fuzzy AF \cite{da2011changing,janssen2008fuzzy,wu2016godel}, Probabilistic AF \cite{hunter2013probabilistic,li2011probabilistic}
 and Weighted AF \cite{Bistarelli2021weighted,dunne2011weighted}.
These quantitative frameworks enrich the expressive power of the classical AF by associating numerical values with arguments or attacks to capture uncertain information.

The evaluation of arguments is a central topic in argumentation literature, commonly achieved through various \emph{semantics} \cite{baroni2011introduction}.
For instance, the well-known \emph{extension semantics} \cite{dung1995acceptability} are proposed for Dung's AF to identify sets of jointly accepted arguments, while \emph{gradual semantics} \cite{da2011changing,Amgoud2017Acceptability,wang2024bilateral} are developed for quantitative frameworks to calculate the \emph{acceptability degree} of arguments.
Moreover, investigating the properties of semantics is crucial for the understanding, comparison and computation of semantics.
The properties of extension semantics and gradual semantics have been deeply examined in studies such as \cite{baroni2007principle,van2017principle} and \cite{baroni2019fine,amgoud2021Evaluation}, respectively.

\emph{SCC-recursiveness} \cite{baroni2005scc} is a sophisticated property of semantics that relies on the graph-theoretical notion of \emph{strongly connected components} (SCCs).
Its significance lies in providing a general schema for characterizing semantics through recursively decomposing an AF into its SCCs.
This schema has attracted extensive interest in the literature as it has been proven to be useful for characterizing new semantics \cite{baroni2005scc,dvovrak2016stage,villata2011attack}, reducing computational efforts \cite{cerutti2014scc,dvovrak2022recursion,Baroni2014On}, and conducting principle analysis \cite{van2017principle,yu2021principle,dvovrak2024principles}.
Recently, this property has also been extended to many other frameworks, such as \emph{ADF} \cite{gaggl2021decomposition}, \emph{SETAF} \cite{dvovrak2024principles}, \emph{Unrestricted AF} \cite{baumann2017study}, among others.

Despite substantial contributions on this topic, almost all existing research on SCC-recursiveness has been restricted to qualitative settings. 
In contrast, research in various quantitative settings remains \emph{open} for investigation \cite{rienstra2018probabilistic}.
This raises the question of how to define an SCC-recursive scheme to characterize semantics in quantitative frameworks, such as probabilistic, fuzzy or weighted.
One underlying challenge is that, in quantitative frameworks, arguments are often evaluated based on the degree to which they can be accepted, which seems inapplicable to the schema.
In this paper, we demonstrate that this property is well-suited to \emph{fuzzy extension semantics}, which is introduced in \cite{wu2016godel} to generalize classical extension semantics in \textit{fuzzy argumentation frameworks} (FAF).
We tailor the existing SCC-recursive schema to enable the characterization of fuzzy extension semantics---including \emph{admissible}\footnote{Following \cite{baroni2018semantics}, we adopt the term `admissible semantics' for the convenience of presentation, even though it is not considered as a semantics in Dung’s original work.}, \emph{complete}, \emph{grounded} and \emph{preferred}---through the recursive decomposition of an FAF along its SCCs.

Investigating the SCC-recursiveness of these semantics benefits their understanding and computation.
In line with the results in qualitative settings, our contributions are twofold.
Theoretically, we show that SCC-recursiveness provides an alternative approach to characterize fuzzy extension semantics, offering a deep understanding and better insight into these semantics.
Practically, our schema provides a sound and complete algorithm for computing fuzzy extension semantics, underpinned by several key theorems. 
As illustrated by examples, this algorithm naturally reduces computational efforts when dealing with graphs containing a large number of SCCs, laying an implementation foundation for real-world applications.


The remaining part of this paper is structured as follows.
Section \ref{Preliminary} reviews some basic concepts.
Section \ref{SCC-recursive in FAF} establishes the basic theory of SCC-recursiveness in FAF.
Section \ref{SCC-recursive semantics in FAF} demonstrates the SCC-recursive characterization of fuzzy extension semantics.
Section \ref{Illustrating Example} uses examples to illustrate the SCC-recursive schema.
Section \ref{Discussion and Conclusion} discusses related work and concludes the paper.

\section{Preliminaries}\label{Preliminary}

\subsection{Fuzzy Set Theory}

\begin{defn}[\cite{zadeh1965fuzzy}]
A \emph{fuzzy set} is a pair $(X, S)$ in which $X$ is a nonempty set called the \emph{universe} and $S: X\rightarrow[0, 1]$ is the associated \emph{membership function}. For each $x\in X$,  $S(x)$ is called the \emph{grade} of membership of $x$ in $X$. \end{defn}

For convenience, when the universe $X$ is fixed,  a fuzzy set  $(X, S)$ is identified by its membership function $S$,  which can be represented by a set of pairs $(x,a)$ with $x\in X$ and $a\in[0,1]$.
We stipulate that all pairs $(x,0)$ are omitted from $S$.
For any $X'\subseteq X$, we denote by $S|_{X'}$ the \emph{restriction} of $S$ to $X'$: for any $x\in X'$, $S|_{X'}(x)=S(x)$, and for any $x\notin X'$, $S|_{X'}(x)=0$.

For instance, the following are fuzzy sets with universe $\{A,B,C\}$: 
$$S_1=\{(A,0.5)\},  S_2=\{(B,0.8),(C,1)\},  S_3=\{(A,0.8),(B,0.8),(C,1)\}.$$
Note that $S_1(A)=0.5, S_1(B)=S_1(C)=S_2(A)=0$, and in $S_3$ every element has a non-zero grade. Evidently, $S_2$ is the restriction of $S_3$ on $\{B,C\}$, i.e., $S_2=S_3|_{\{B,C\}}$.

A \emph{fuzzy point} is a fuzzy set containing a unique pair $(x,a)$. We may identify a fuzzy point  by its pair. For example, $S_1$ is a fuzzy point and identified by $(A,0.5)$.

Let $S_1$ and $S_2$ be two fuzzy sets. Say $S_1$ is a \emph{subset} of $S_2$, denoted by $S_1\subseteq S_2$, if for any $x\in X$, $S_1(x)\le S_2(x)$. Conventionally, we write $(x, a)\in S$ if a fuzzy point  $(x, a)$ is a subset of $S$. Moreover, we shall use the following notations:
\begin{itemize}
\setlength{\itemsep}{0pt}
\setlength{\parsep}{0pt}
\setlength{\parskip}{0pt}
  \item the \emph{union}  of $S_1$ and $S_2$: $S_1\cup S_2=\{(x,\max\{S_1(x),S_2(x)\})\mid x\in X\}$;
  \item the \emph{intersection} of $S_1$ and $S_2$: $S_1\cap S_2=\{(x,\min\{S_1(x),S_2(x)\})\mid x\in X\}$.
\end{itemize}

In the above example,  $S_1(x)\leq S_3(x)$ for each element $x$, thus fuzzy point $S_1$ is a subset of $S_3$, written as $(A,0.5)\in S_3$. Similarly, it is easy to check: (i) $S_2\subseteq S_3$; (ii)  $S_2\cup S_3=\{(A,0.8),(B,0.8),(C,1)\}$; (iii) $S_1\cap S_3=\{(A,0.5)\}$. 

\subsection{Fuzzy Argumentation Frameworks}

In this paper, we focus on \emph{fuzzy argumentation framework} and its associated \emph{fuzzy extension semantics} introduced in \cite{wu2016godel}.

\begin{defn}\label{FAS}
A \emph{fuzzy argumentation framework} (FAF) over a finite set of arguments $Args$ is a pair  $\mathcal{F}=\langle\mathcal{A}, \mathcal{R}\rangle$ in which $\mathcal{A}: Args \rightarrow [0, 1]$ and $\mathcal{R}: Args \times Args \rightarrow [0, 1]$ are total functions. 
\end{defn}

In the definition, $\mathcal{A}$ and  $\mathcal{R}$ are fuzzy sets of arguments and attacks. $\mathcal{A}$ can be denoted by pairs $(A,\mathcal{A}(A))$ 
and $\mathcal{R}$ can be denoted by pairs $\big( (A,B),\mathcal{R}(A,B)\big)$ or simply $\big( (A,B),\mathcal{R}_{_{AB}}\big)$. 
Moreover, we denote by $Att(A)$ the set of all  attackers of $A$, i.e.,  $Att(A)=\{B\in Args\mid  \mathcal{R}(B,A)> 0\}.$ 
For instance, we depict an FAF over $Args=\{A, B, C\}$ in Figure \ref{a simple FAF}, where
\begin{displaymath}
\mathcal{F}=\langle\{(A,0.8),(B,0.7),(C,0.9)\},\{\big( (A,B),0.8\big), \big( (B,C),0.9\big)\}\rangle.
\end{displaymath}
In the subsequent section, we apply a simple representation:
\begin{itemize}
    \item $(A,a)$ can be represented as $A_a$,
    \item $\big( (A,B),r\big)$ can be represented as $A\xrightarrow{r} B$.
\end{itemize}
Therefore, the above FAF can be represented as
\begin{displaymath}
\mathcal{F}=\langle\{A_{0.8},B_{0.7}, C_{0.9}\},\{A\xrightarrow{0.8} B, B\xrightarrow{0.9} C\}\rangle.
\end{displaymath}

\begin{figure}[H]
\centering
\scalebox{0.8}{
\begin{tikzpicture}[
roundnode/.style={circle, draw=black!100, fill=white!5, thick, minimum size=5mm},
rectanglenode/.style={rectangle, draw=black!0, fill=white!5, thick, minimum size=0mm},
]
\node[roundnode]      (1) at(-2,0)      {$A: {0.8}$};
\node[roundnode]      (2) at(1,0)      {$B: {0.7}$};
\node[roundnode]      (3) at(4,0)      {$C: {0.9}$};
\draw[->,thick] (1)--(2);
\draw[->,thick] (2)--(3);
\node at(-0.5,0.2) {0.8};
\node at(2.5,0.2) {0.9};
\end{tikzpicture}
}
\caption{A simple FAF}
\label{a simple FAF}
\end{figure}

While arguments with conflict cannot be accepted together in classical semantics,
semantics in quantitative settings allow for a certain degree of tolerance for internal conflicts among arguments \cite{da2011changing,dunne2011weighted,hunter2013probabilistic}, enabling weak attacks to be ignored. 
We review the notion of \emph{tolerable} attacks from \cite{wu2016godel}.

\begin{defn}
Suppose $(A, a)$ attacks $(B, b)$ w.r.t. $\mathcal{R}_{_{AB}}$. 
Then the attack is \emph{tolerable} if $a*\mathcal{R}_{_{AB}}+b\leq 1$, otherwise it is \emph{sufficient}.
Here, $*$ is a shorthand s.t. $a*\mathcal{R}_{_{AB}}=\min\{a,\mathcal{R}_{_{AB}}\}$.
\end{defn}

Note that the degrees of the attacker and the attack relation are aggregated toward the attackee.
Intuitively, a tolerable attack can be \emph{ignored} and a sufficient attack \emph{weakens} the attackee.

\begin{defn}\label{weaken}
Let $(A, a)$ attacks $(B, b)$ w.r.t. $\mathcal{R}_{_{AB}}$.
Then $(A, a)$ \emph{weakens} $(B, b)$ to $(B, b')$ where $b'= \min\{1 - a*\mathcal{R}_{_{AB}}, b\}$, or precisely
\begin{displaymath}
   b'=
   \begin{cases}
       b                                  & \text{if } a*\mathcal{R}_{_{AB}}+b\leq 1, \\
     1 - a*\mathcal{R}_{_{AB}} & \text{if } a*\mathcal{R}_{_{AB}}+b> 1.
   \end{cases}
\end{displaymath}
We say that $S\subseteq \mathcal{A}$ \emph{weakens} $(B,b)$ to $(B,b')$ if $\exists (C,c)\in S$ weakens $(B,b)$ to $(B,b')$.
\end{defn}

The notion of \emph{weakening defence} and its associated characteristic function are reviewed below, indicating that a fuzzy set of arguments defends a fuzzy argument by weakening its attackers.

\begin{defn}\label{weakening defend}
A fuzzy set $S$ \emph{weakening defends} a fuzzy argument $(A, a)$ iff, for any $(B, b)$ that sufficiently attacks $(A, a)$, $S$ weakens $(B, b)$ to $(B, b^{'})$ s.t. $(B, b^{'})$ tolerably attacks $(A, a)$.
\end{defn}

\begin{defn}
The characteristic function of an FAF $\mathcal{F}=\langle\mathcal{A},\mathcal{R}\rangle$ is a function $F_{\mathcal{F}}$ from the set of all the subsets of $\mathcal{A}$ to itself, such that $\forall S\in \mathcal{A}$, $F_{\mathcal{F}}(S) = \{(A, a) \mid S \text{ weakening defends } (A, a)\}.$
\end{defn}

The fuzzy extension semantics in \cite{wu2016godel} are reviewed as follows.

\begin{defn}
Let $\mathcal{F}=\langle\mathcal{A},\mathcal{R}\rangle$ be an FAF and a fuzzy set $E\subseteq \mathcal{A}$.
$E$ is \emph{conflict-free} if all attacks in $E$ are tolerable.
Let $E$ be a conflict-free fuzzy set. Then we define:

$E$ is an \emph{admissible} fuzzy extension if $E$ weakening defends each element in $E$, i.e., $E\subseteq F_{\mathcal{F}}(E)$.

$E$ is a \emph{complete} fuzzy extension if it contains all the elements in $\mathcal{A}$ that $E$ weakening defends, i.e., $E=F_{\mathcal{F}}(E)$.

$E$ is a \emph{preferred} fuzzy extension iff it is a maximal admissible fuzzy extension.

$E$ is the \emph{grounded} fuzzy extension iff it is the least fixed point of the characterization function $F_{\mathcal{F}}$.

The set of fuzzy extensions under a given semantics $\mathcal{S}$ is denoted by $\mathcal{E}_{\mathcal{S}}(\mathcal{F})$. 
\end{defn}

Whereas classical extension semantics identify the arguments that can be accepted, fuzzy extension semantics quantify the degree to which arguments can be accepted, also called \emph{acceptability degree} in this paper.

\begin{exmp}
Continue considering the FAF depicted in Figure \ref{a simple FAF}.
We compute a complete fuzzy extension $E$. 
Given that $A$ has no attackers, we have $E(A)=\mathcal{A}(A)=0.8$.
Since $B$ is weakened by $A$, it follows that $E(B)=1-E(A)*\mathcal{R}_{_{AB}}=0.2$.
As $B$ is weakened, $C$ is weakening defended to the degree of $0.8$, leading to $E(C)=0.8$.
Therefore, the acceptability degrees of $A$, $B$ and $C$ are $0.8$, $0.2$ and $0.8$, respectively.
Consequently, we obtain a complete fuzzy extension $\{(A,0.8),(B,0.2),(C,0.8)\}$.
\end{exmp}

\section{SCC-recursiveness}\label{SCC-recursive in FAF}
\subsection{Graph Notations}

The notion of SCC-recursiveness relies on the graph-theoretical notion of strongly connected components.
To begin, we should integrate the graph notations into FAF.

\begin{defn}
Let $\mathcal{F}=\langle\mathcal{A}, \mathcal{R}\rangle$ be an FAF over $Args$. 
Given an argument $A\in Args$, we define the parent nodes of $A$ as $par_{\mathcal{F}}(A)=\{B\mid \mathcal{R}(B,A)> 0\}$.
$A$ is called an \emph{initial} node if $par_{\mathcal{F}}(A)=\varnothing$.
\end{defn}

\begin{defn}
Let $\mathcal{F}=\langle\mathcal{A},\mathcal{R}\rangle$ be an FAF over $Args$, $A\in Args$ and $S,P\subseteq \mathcal{A}$. We define that:
\begin{itemize}
\setlength{\itemsep}{0pt}
\setlength{\parsep}{0pt}
\setlength{\parskip}{0pt}
    \item $S$ attacks $A$ iff $\exists B\in S$ s.t. $\mathcal{R}(B,A)>0$;
    \item $A$ attacks $S$ iff $\exists B\in S$ s.t. $\mathcal{R}(A,B)>0$;
    \item  $S$ attacks $P$ iff $\exists A\in S$  and  $\exists B\in P$ s.t. $\mathcal{R}(A,B)>0$;
    \item  $outpar_{\mathcal{F}}(S)=\{A\in Args \mid A\notin S \text{ and } A\text{ attacks }S\}$.
\end{itemize}
\end{defn}

The notions of \emph{path} and \emph{path-equivalence} are defined as follows.

\begin{defn}
Let $\mathcal{F}=\langle\mathcal{A},\mathcal{R}\rangle$ be an $\mathit{FAF}$. We say that there is a \emph{path} from $A_1$ to $A_n$ iff there is a sequence $\{A_1,A_2,...,A_n\}$ such that $\mathcal{R}(A_i,A_{i+1})>0$ for $i\in \{1,...,n-1\}$.

\end{defn}

\begin{defn}
Let $\mathcal{F}=\langle\mathcal{A},\mathcal{R}\rangle$ be an FAF over $Args$. The binary relation of \emph{path-equivalence} between nodes, denoted as $PE_{\mathcal{F}}\subseteq Args\times Args$, is defined as follows:
\begin{itemize}
    \item $\forall A\in Args$, $(A,A)\in PE_{\mathcal{F}}$;
    \item given two distinct arguments $A,B\in Args$, $(A,B)\in PE_{\mathcal{F}}$ iff there is a path from $A$ to $B$ and a path from $B$ to $A$.
\end{itemize} 
\end{defn}

The notion of \emph{strongly connected components} is defined as follows.

\begin{defn}
Let $\mathcal{F}=\langle\mathcal{A},\mathcal{R}\rangle$ be an FAF over $Args$.
The equivalence classes under the path-equivalence relation are called \emph{strongly connected components} (SCCs) of $\mathcal{F}$.
We denote the set of SCCs of $\mathcal{F}$ by $SCCS_{\mathcal{F}}$.
Given an argument $A\in Args$, the SCC that $A$ belongs to is denoted as $SCC_{\mathcal{F}}(A)=\{B \mid (A,B)\in PE_{\mathcal{F}}\}$.
\end{defn}

We extend the notion of parents to SCCs, representing the set of other SCCs that attack a given SCC $S$ as $sccpar_{\mathcal{F}}(S)$.
Additionally, we introduce the concept of proper ancestors, denoted as $sccanc_{\mathcal{F}}(S)$.

\begin{defn}
Let $\mathcal{F}=\langle\mathcal{A},\mathcal{R}\rangle$ be an FAF and  $S\in SCCS_{\mathcal{F}}$. 
We define
\begin{displaymath}
\begin{aligned}
sccpar_{\mathcal{F}}(S)& =\{P\in SCCS_{\mathcal{F}}\mid P\neq S \text{ and P attacks S}\} \\
sccanc_{\mathcal{F}}(S)& =sccpar_{\mathcal{F}}(S)\cup \bigcup\limits_{P\in sccpar_{\mathcal{F}}(S)} sccanc_{\mathcal{F}}(P)
\end{aligned}
\end{displaymath}
An SCC $S$ is called \emph{initial} if $sccpar_{\mathcal{F}}(S)=\varnothing$.
\end{defn}

For the purpose of decomposition, we introduce the notion of \emph{restriction} of an FAF.
Before the formal definition, consider attacks that are \emph{always} tolerable.
For instance, given an FAF $\mathcal{F}: A_{0.3}\xrightarrow{1.0} B_{0.2}$, it is evident that $\mathcal{A}(A)*R(A,B)+\mathcal{A}(B)=0.3*1+0.2\leq 1$, indicating that the attack is always tolerable in $\mathcal{F}$. 
We eliminate such always tolerable attacks when obtaining the restricted sub-frameworks from an FAF.

\begin{defn}\label{restriction of FAF}
Let $\mathcal{F}=\langle\mathcal{A},\mathcal{R}\rangle$ be an FAF over $Args$ and $\mathcal{A}'\subseteq \mathcal{A}$. The \emph{restriction} of $\mathcal{F}$ to $\mathcal{A}'$ is the sub-framework
${\mathcal{F}\!\downarrow}_{\mathcal{A}'}=\langle \mathcal{A}',\mathcal{R}'\rangle$ where $\mathcal{R}'$ satisfies that
\begin{itemize}
    \item if $\mathcal{A}'(A)*\mathcal{R}(A,B)+\mathcal{A}'(B)>1$, then $\mathcal{R}'(A,B)=\mathcal{R}(A,B)$;
    \item otherwise $\mathcal{R}'(A,B)=0$.
\end{itemize}
\end{defn}

For simplicity of discussion, we assume that the original FAF contains no always tolerable attacks in the subsequent discussion.

\subsection{Basic Theory}

While the idea behind SCC-recursiveness is intuitive and natural, the initial formalization of its required notions may seem quite complex due to its recursive nature.
Therefore, we clarify its basic idea in this section.\footnote{Referring to the examples provided in Section \ref{Illustrating Example} helps in understanding the concepts in this section.}
In the following, we consider a generic FAF $\mathcal{F}=\langle\mathcal{A},\mathcal{R}\rangle$ and a semantics $\mathcal{S}\in$ \textit{\{admissible, complete, preferred, grounded\}}.

To start, let us treat SCCs as single nodes.
Then any FAF can be viewed as a directed acyclic graph; that is, the attack relation induces a partial order among the SCCs.
Furthermore, it is evident that the acceptability degree of an argument depends on its ancestor nodes.
This implies that semantics can be computed following the sequence of SCCs.
We begin by computing semantics for an initial SCC $\hat{S}$.
To achieve this, we examine the sub-framework over $\hat{S}$ by restricting $\mathcal{F}$ to $\mathcal{A}|_{\hat{S}}$.
The semantics of this sub-framework are processed by a \emph{base function}, denoted as $\mathcal{BF}_\mathcal{S}$, which is defined to return the set of all fuzzy extensions under semantics $\mathcal{S}$.


Now, we arrive at the crucial problem: how to compute semantics for a given SCC $S$ after computing its ancestor SCCs.
Suppose $A\in S$ and $E\in \mathcal{E}_{\mathcal{S}}(\mathcal{F})$.
Let $\max\limits_{B\in {outpar_{\mathcal{F}}(S)}} E(B)*\mathcal{R}_{_{BA}}=\tilde{a}$.
Then $A$ is weakened to the lesser of $1-\tilde{a}$ or $\mathcal{A}(A)$.
Following Definition \ref{weakening defend}, only $A$'s unweakened degree can influence its target arguments within $S$.
This implies that we only need to consider a restricted sub-framework over $S$ where arguments with the unweakened degree, e.g., the lesser of $1-\tilde{a}$ or $\mathcal{A}(A)$ for $A$. 
Note that the relevant attacks may be suppressed when obtaining the restricted sub-framework, leading to the recursive decomposition of SCCs.
Furthermore, an argument can be accepted to some degree only if it can be defended to that degree.
Therefore, it is necessary to identify the degree to which an argument can be defended by $E$ from outside $S$.
Consequently, we can distinguish three components:
\begin{itemize}
    \item  \emph{Weakened Component}, denoted as $W_{\mathcal{F}} (S,E)$, which represents the weakened degree of arguments in $S$ by the ancestor SCCs.
    \item \emph{Restricted Component}, denoted as $R_{\mathcal{F}} (S,E)$, which represents the unweakened degree of arguments in $S$.
    \item \emph{Defended Component}, denoted as $D_{\mathcal{F}} (S,E)$, a subset of $R_{\mathcal{F}} (S,E)$ that represents the degree to which an argument can be defended by $E$ from outside $S$.
\end{itemize}

\begin{defn}\label{WRD Component}
Let $\mathcal{F}=\langle\mathcal{A},\mathcal{R}\rangle$ be an FAF, $E\subseteq \mathcal{A}$ and $S\in SCCS_{_{\mathcal{F}}}$. We define that
    \begin{align*}
    \mathit{W}_{\mathcal{F}} (S,E)=&\{(A,a)\mid A\in S, a=\max \limits_{B \in outpar_{\mathcal{F}}(S)} E(B)*\mathcal{R}_{_{BA}}\}\\
    \mathit{R}_{\mathcal{F}} (S,E)=&\{(A,a)\mid A\in S, a=\min\{1-W_{\mathcal{F}} (S,E)(A), \mathcal{A}(A)\}\}\\
    \mathit{D}_{\mathcal{F}} (S,E)=&\{(A,a)\mid A\in S, \forall B\in outpar_{\mathcal{F}}(S), E \text{ weakens } (B,\mathcal{A}(B)) \text{ to }\\
    &   (B,b) \text{ that tolerably attacks } (A,a)\}.
    \end{align*}
\end{defn}
 
From the above discussion, the computation of semantics over $S$ depends on the restricted sub-framework $\mathcal{F}\!\downarrow _{R_{\mathcal{F}}(S,E)}$ and $\mathit{D}_{\mathcal{F}} (S,E)$.
For the purpose of computation, we define a \emph{generic function}, denoted as $\mathcal{GF}$, where
      \begin{itemize}
          \item input: a (possibly restricted) FAF $\mathcal{F}=\langle \mathcal{A},\mathcal{R}\rangle$ and a fuzzy set $C$;
          \item output: a set of subsets of $C$.
      \end{itemize}

We will use the notation $\mathcal{GF}(\mathcal{F},C)$ for the generic function, which is defined as follows.
If $\mathcal{F}$ consists of exactly one SCC, then $\mathcal{GF}(\mathcal{F},C)$ coincides with a \emph{base function} $\mathcal{BF}_{\mathcal{S}}(\mathcal{F},C)$, which is defined to obtain the fuzzy extensions of $\mathcal{F}$ contained in $C$ under semantics $\mathcal{S}$.\footnote{Note that this base function covers the previous case for computing the semantics of initial SCCs.}
On the other hand, if $\mathcal{F}$ can be decomposed into several SCCs, then $\mathcal{GF}(\mathcal{F},C)$ is obtained by recursively applying $\mathcal{GF}$ to each SCC of $\mathcal{F}$. 
Formally, this means that for any $S\in SCCS_{\mathcal{F}}$, $E|_S\in \mathcal{GF}(\mathcal{F}\!\downarrow_{R_{\mathcal{F}}(S,E)},C')$, where $C'$ represents the defended component.
Note that $C'$ is determined by considering both the attacks coming from outside $\mathcal{F}$ (as $\mathcal{F}$ is possibly restricted) and those coming from other SCCs within $\mathcal{F}$, yielding $C'=C\cap D_{\mathcal{F}}(S,E)$.

We now formally introduce SCC-recursiveness as a principle for fuzzy extension semantics.

\begin{defn}\label{The base definition of SCC-recursiveness in FAF}
A semantics $\mathcal{S}$ is \emph{SCC-recursive} iff for any FAF $\mathcal{F}=\langle\mathcal{A},\mathcal{R}\rangle$, $\mathcal{E}_\mathcal{S}(\mathcal{F})=\mathcal{GF}(\mathcal{F},\mathcal{A})$, where for any $\mathcal{F}=\langle\mathcal{A},\mathcal{R}\rangle$ and for any $C\subseteq \mathcal{A}$, the function $\mathcal{GF}(\mathcal{F},C)\subseteq 2^\mathcal{A}$ is defined as follows:
for any $E\subseteq \mathcal{A}$, $E\in \mathcal{GF}(\mathcal{F},C)$ if and only if
\begin{itemize}
\item in case $|SCCS_{\mathcal{F}}|=1$, $E\in \mathcal{BF}_\mathcal{S}(\mathcal{F}, C)$,
\item otherwise, $\forall S\in SCCS_{\mathcal{F}}$, $E|_S\in \mathcal{GF}(\mathcal{F}\!\downarrow_{R_{\mathcal{F}}(S,E)},D_{\mathcal{F}}(S,E)\cap C)$,
\end{itemize}
where $\mathcal{BF}_\mathcal{S}(\mathcal{F}, C)$ is the \emph{base function} that, given an FAF $\mathcal{F}=\langle\mathcal{A},\mathcal{R}\rangle$ s.t. $|SCCS_{\mathcal{F}}|=1$ and a fuzzy set $C\subseteq \mathcal{A}$, returns a set of fuzzy extensions contained in $C$.
\end{defn}

As noticed before, the generic function $\mathcal{GF}(\mathcal{F}, C)$ is recursively defined.
The base of the recursion is given by the base function $\mathcal{BF}_{\mathcal{S}}(\mathcal{F}, C)$, which returns the results for $\mathcal{F}$ consisting of a single SCC. 
When $\mathcal{F}$ consists of more than one SCC, the recursive step involves a decomposition schema along its SCCs.
Consequently, to characterize a semantics as SCC-recursive, we need to identify its base function and demonstrate that it fits the decomposition schema.

The definition naturally provides a schema for computing SCC-recursive semantics. 
Consider a generic FAF $\mathcal{F}=\langle \mathcal{A},\mathcal{R}\rangle$ over $Args$. 
First, for any initial SCC $S$, we have $R_{\mathcal{F}}(S,E) = D_{\mathcal{F}}(S,E)=\mathcal{A}|_{S}$.
Thus, the restricted sub-framework over $S$ is $\mathcal{F}\!\downarrow_{\mathcal{A}|_{S}}$, which clearly consists of a unique SCC.
According to Definition \ref{The base definition of SCC-recursiveness in FAF}, the base function $\mathcal{BF}_\mathcal{S}(\mathcal{F}\!\downarrow_{\mathcal{A}|_{S}},\mathcal{A}|_{S})$ is invoked, returning the set of fuzzy extensions of $\mathcal{F}\!\downarrow_{\mathcal{A}|_{S}}$ under semantics $\mathcal{S}$.
The results are then utilized to identify the restricted sub-frameworks in subsequent SCCs, taking account of the restricted and defended components.
This procedure is recursively invoked and can be summarized as follows:
\begin{enumerate}
\setlength{\itemsep}{0pt}
\setlength{\parsep}{0pt}
\setlength{\parskip}{0pt}
  \item A (possibly restricted) FAF is partitioned into its SCCs; they form a partial order induced by the attack relation.
  \item The set of fuzzy extensions over each initial SCC is determined using a semantic-specific base function.
  \item For each fuzzy extension determined at step 2, the restricted and defended components within subsequent SCCs are identified; then the associated restricted sub-framework is taken into account.
  \item The steps 1–3 are applied recursively on the restricted FAF obtained at step 3.
\end{enumerate}

\section{SCC-recursive Characterization of Semantics}\label{SCC-recursive semantics in FAF}

\subsection{Generalized Fuzzy Extension Semantics}

In order to develop an SCC-recursive characterization of semantics, it is necessary to redefine fuzzy extension semantics with reference to a specific subset $C$.
In the following, we consider a generic FAF $\mathcal{F} =\langle \mathcal{A},\mathcal{R}\rangle$ and a fuzzy set $C\subseteq \mathcal{A}$.
The notion of admissible fuzzy extension in $C$ is defined as follows.

\begin{defn}\label{the admissible extension in C}
A fuzzy set $E\subseteq \mathcal{A}$ is an admissible fuzzy extension in $C$ iff $E\subseteq C$ and $E\in \mathcal{AE}(\mathcal{F})$.
The set of admissible fuzzy extensions in $C$ is denoted as $\mathcal{AE}(\mathcal{F},C)$.
\end{defn}

Next, we introduce the notions of complete and preferred fuzzy extensions in $C$.

\begin{defn}\label{the complete extension in C}
A fuzzy set $E$ is a complete fuzzy extension in $C$ iff $E\in \mathcal{AE}(\mathcal{F},C)$,
and it contains all the elements in $C$ that $E$ weakening defend.
The set of complete fuzzy extensions in $C$ is denoted as $\mathcal{CE}(\mathcal{F},C)$.
\end{defn}

\begin{defn}\label{the preferred extension in C}
A preferred fuzzy extension in $C$ is a maximal element of $\mathcal{AE}(\mathcal{F},C)$.
The set of preferred fuzzy extensions in $C$ is denoted as $\mathcal{PE}(\mathcal{F},C)$.
\end{defn}

The following proposition shows that preferred fuzzy extensions always exist for any FAF $\mathcal{F}=\langle \mathcal{A},\mathcal{R}\rangle$ and for any $C\subseteq \mathcal{A}$.

\begin{prop}
\label{the existense of preferred extensions}
Given an FAF $\mathcal{F}=\langle \mathcal{A},\mathcal{R}\rangle$ and a fuzzy set $C\subseteq \mathcal{A}$, there is always a preferred fuzzy extension $E\in \mathcal{PE}(\mathcal{F},C)$.
\end{prop}

\begin{proof}
We first prove that for any $S\in \mathcal{AE}(\mathcal{F},C)$, if $(A,a)\in C$ is weakening defended by $S$, then $S\cup \{(A,a)\}$ is also admissible in $C$.
Clearly, $S\cup \{(A,a)\}\subseteq C$ and $S\cup \{(A,a)\}\subseteq F_{(\mathcal{F},C)}(S)$.
So we only need to prove that $S\cup \{(A,a)\}$ is conflict-free.
Let's prove it by contradiction.
Suppose that $S\cup \{(A,a)\}$ is not conflict-free.
Then by the hypothesis that $S$ is admissible, $\exists (B,b)\in S$ s.t. $(B,b)$ sufficiently attacks $(A,a)$ or $(A,a)$ sufficiently attacks $(B,b)$.
Since $(A,a)$ is weakening defended by $S$, if $(B,b)$ sufficiently attacks $(A,a)$, then $S$ weakens $(B,b)$ to $(B,b')$ which tolerably attacks $(A,a)$.
It contradicts that $(B,b)\in S$ and $S$ is conflict-free.
Moreover, if $(A,a)$ sufficiently attacks $(B,b)$, then $S$ weakens $(A,a)$ to $(A,a')$ s.t. $(A,a')$ tolerably attacks $(B,b)$, which again leads to that $\exists (C,c)\in S$ sufficiently attacks $(A,a)$.
Consequently, for any fuzzy set $S\in \mathcal{AE}(\mathcal{F},C)$, if $(A,a)\in C$ is weakening defended by $S$, then $S\cup \{(A,a)\}$ is also admissible in $C$.
It directly follows that the elements in $\mathcal{AE}(\mathcal{F},C)$ form a complete partial order w.r.t. fuzzy set inclusion.
Since $\{\varnothing\}$ is always an admissible fuzzy extension in $C$, there is always a preferred fuzzy extension $E\in \mathcal{PE}(\mathcal{F},C)$.
\end{proof}

Proposition \ref{Preferred} shows that a preferred fuzzy extension in $C$ is also complete in $C$.

\begin{prop}\label{Preferred}
A preferred fuzzy extension in $C$ is also complete in $C$.
\end{prop}

\begin{proof}
From the proof of Proposition \ref{the existense of preferred extensions}, it is evident that a preferred fuzzy extension in $C$ is always complete in $C$.
\end{proof}

\begin{defn}
The characteristic function of $\mathcal{F}$ in $C$ is defined as follows: 
\begin{itemize}
  \item $F_{(\mathcal{F},C)}:2^C\rightarrow 2^C$
  \item $F_{(\mathcal{F},C)}(S)=\{(A,a)\mid (A,a)\in C, (A,a) \text{ is weakening defended by } S\}.$
\end{itemize}
\end{defn}

It is easy to see that $F_{(\mathcal{F},C)}$ is monotonic w.r.t. fuzzy set inclusion.
Then the grounded fuzzy extension in $C$ can be defined in terms of the least fixed point of the characteristic function in $C$.

\begin{defn}\label{the grounded extension in C}
The grounded fuzzy extension in $C$, denoted as $GE(\mathcal{F},C)$, is the least fixed point of $F_{(\mathcal{F},C)}$.
\end{defn}

The following proposition demonstrates that the grounded fuzzy extension always exists and is unique for any FAF $\mathcal{F}=\langle \mathcal{A},\mathcal{R}\rangle$ and any $C\subseteq \mathcal{A}$.

\begin{prop}\label{Grounded Lemma}
For any FAF $\mathcal{F} =\langle \mathcal{A},\mathcal{R}\rangle$ and any $C\subseteq \mathcal{A}$, $GE(\mathcal{F},C)$ exists and is unique.
\end{prop}

\begin{proof}
From that $F_{(\mathcal{F},C)}$ is monotonic, it follows from the Knaster-Tarski theorem that $F_{(\mathcal{F},C)}$ has the least fixed point.
Therefore, $GE(\mathcal{F},C)$ exists and is unique.
\end{proof}

Proposition \ref{Grounded} states that the grounded fuzzy extension in $C$ is also the least complete fuzzy extension in $C$.

\begin{prop}\label{Grounded}
$GE(\mathcal{F},C)$ is the least complete fuzzy extension in $C$.
\end{prop}

\begin{proof}
Suppose an FAF $\mathcal{F} =\langle \mathcal{A},\mathcal{R}\rangle$ over a finite set of arguments $Args$ and $C\subseteq \mathcal{A}$.
Since $Args$ is finite, we can construct the least complete fuzzy extension by letting $F^i_{(\mathcal{F},C)}(\varnothing)$ for ordinals $i$, where $F^1_{(\mathcal{F},C)}(\varnothing)=F_{(\mathcal{F},C)}(\varnothing)$ and $F^i_{(\mathcal{F},C)}(\varnothing)=F_{(\mathcal{F},C)}(F^{i-1}_{(\mathcal{F},C)}(\varnothing))$.
Then $\bigcup_{i=1,...,\infty} F^i_{(\mathcal{F},C)}(\varnothing)$ is clearly a complete fuzzy extension in $C$.
We now prove $GE(\mathcal{F},C)=\bigcup_{i=1,...,\infty} F^i_{(\mathcal{F},C)}(\varnothing)$:
\begin{itemize}
    \item Given that $\bigcup_{i=1,...,\infty} F^i_{(\mathcal{F},C)}(\varnothing)$ is clearly a fixed point of $F_{(\mathcal{F},C)}$ and  $GE(\mathcal{F},C)$ is the least fixed point, we have $GE(\mathcal{F},C)\subseteq \bigcup_{i=1,...,\infty} F^i_{(\mathcal{F},C)}(\varnothing)$.
    \item From the construction of $F^i_{(\mathcal{F},C)}(\varnothing)$, it is easy to see that $F^i_{(\mathcal{F},C)}(\varnothing)$ should be contained in any fixed point of $F_{(\mathcal{F},C)}$, and therefore \\$\bigcup_{i=1,...,\infty} F^i_{(\mathcal{F},C)}(\varnothing)\subseteq GE(\mathcal{F},C)$.
\end{itemize} 
Consequently, $GE(\mathcal{F},C)$ is the least complete fuzzy extension in $C$.
\end{proof}

Since the original version of fuzzy extension semantics is recovered by letting $C = \mathcal{A}$, the generalized definition covers the original ones.

\subsection{Admissible Semantics}

We first establish that admissible semantics fit the decomposition schema along SCCs.
The characterization serves as the foundation for analyzing other semantics. 
This is achieved by Theorem \ref{admissible proposition}, which requires two preliminary lemmas.

\begin{lem}\label{the lemma for the right arrow of admissible}
Let $\mathcal{F}=\langle\mathcal{A},\mathcal{R}\rangle$ be an FAF  and $E$ be an admissible fuzzy extension.
Suppose $(A,a)\in F_{\mathcal{F}}(E)$, denoting $SCC_{\mathcal{F}}(A)$ as S, then it holds that:

\begin{itemize}
  \item $(A,a)\in D_{\mathcal{F}}(S,E)$;
  \item $(A,a)$ is weakening defended by $E|_S$ in $\mathcal{F}\!\downarrow_{R_{\mathcal{F}}(S,E)}$.
\end{itemize}
\end{lem}
\begin{proof}

From $(A,a)\in F_{\mathcal{F}}(E)$, it follows that for any $B\in outpar_{\mathcal{F}}(S)$, if $(B,\mathcal{A}(B))$ sufficiently attacks $(A,a)$, then $E$ weakens $(B,b)$ to $(B,b')$ which tolerably attacks $(A,a)$, i.e., $(A,a)\in D_{\mathcal{F}}(S,E)$.

Turning to the second part of the proof, since $E$ is admissible, all the elements in $E$ are weakening defended by $E$.
Thus, $E|_S\subseteq F_\mathcal{F}(E)$ and therefore $E|_S\subseteq D_\mathcal{F}(S,E)$.
Now we need to prove that $(A,a)$ is weakening defended by $E|_S$ in $\mathcal{F}\!\downarrow_{R_{\mathcal{F}}(S,E)}$.
Assume $(B,b)\in R_\mathcal{F}(S,E)$ that sufficiently attacks $(A,a)$ in $\mathcal{F}\!\downarrow_{R_{\mathcal{F}}(S,E)}$.
Then from Definition \ref{restriction of FAF}, $(B,b)$ also sufficiently attacks $(A,a)$ in $\mathcal{F}$.
Since $(A,a)\in F_\mathcal{F}(E)$, $E$ weakens $(B,b)$ to $(B,b')$ which tolerably attacks $(A,a)$. 
Moreover, $(B,b)$ is weakened to $(B,b')$ by $E|_S$ in $\mathcal{F}$ due to $(B,b)\in R_\mathcal{F}(S,E)$.
Again utilizing Definition \ref{restriction of FAF}, $(B,b)$ is weakened to $(B,b')$ by $E|_S$ in $\mathcal{F}\!\downarrow_{R_{\mathcal{F}}(S,E)}$.
Therefore, $E|_S$ weakens $(B,b)$ to $(B,b')$ s.t. $(B,b')$ tolerably attacks $(A,a)$ in $\mathcal{F}\!\downarrow_{R_{\mathcal{F}}(S,E)}$.
Consequently, $(A,a)$ is weakening defended by $E|_S$ in $\mathcal{F}\!\downarrow_{R_{\mathcal{F}}(S,E)}$.
\end{proof}

\begin{lem}\label{the lemma for the left arrow of admissible}
Given an FAF $\mathcal{F}=\langle\mathcal{A},\mathcal{R}\rangle$, let $E\subseteq \mathcal{A}$ be a fuzzy set s.t.  $\forall S\in SCCS_{\mathcal{F}}$, $E|_S\in \mathcal{AE}(\mathcal{F}\!\downarrow_{R_{\mathcal{F}}(S,E)}, D_{\mathcal{F}}(S,E))$.
Then for any $\hat{S}\in SCCS_\mathcal{F}$ and any $(A,a)\in D_\mathcal{F}(\hat{S},E)$, if $(A,a)$ is weakening defended by $E|_{\hat{S}}$ in $\mathcal{F}\!\downarrow_{R_{\mathcal{F}}(\hat{S},E)}$, 
then $(A,a)$ is weakening defended by $E$ in $\mathcal{F}$. 
\end{lem}

\begin{proof}
We need to prove that for any $(B,b)\in \mathcal{A}$ that sufficiently attacks $(A,a)$,
then $E$ weakens $(B,b)$ to $(B,b')$ which tolerably attacks $(A,a)$.
We distinguish two cases for $B$.

Case 1: $B\in \hat{S}$.
If $(B,b)\notin R_{\mathcal{F}}(\hat{S},E)$, then from Definition \ref{WRD Component}, $E$ weakens $(B,b)$ to $(B,b')$ s.t. $(B,b')\in R_\mathcal{F}(\hat{S},E)$.
Therefore, without loss of generality, suppose $(B,b)\in R_{\mathcal{F}}(\hat{S},E)$.
By the hypothesis of $(A,a)\in F_{\mathcal{F}\!\downarrow_{R_{\mathcal{F}}(\hat{S},E)}}(E|_{\hat{S}})$, it can be concluded that $E$ weakens $(B,b)$ to $(B,b')$ which tolerably attacks $(A,a)$ in $\mathcal{F}\!\downarrow_{R_{\mathcal{F}}(\hat{S},E)}$.
Then according to Definition \ref{restriction of FAF}, $(B,b)$ is weakened to $(B,b')$ by $E$ in $\mathcal{F}$, and $(B,b')$ tolerably attacks $(A,a)$ in $\mathcal{F}$.

Case 2: $B\notin \hat{S}$.
Then $B\in (outpar_\mathcal{F}(\hat{S})\cap par_{\mathcal{F}}(A))$.
By the hypothesis of $(A,a)\in D_\mathcal{F}(\hat{S},E)$, we derive that $E$ weakens $(B,b)$ to $(B,b')$ which tolerably attacks $(A,a)$ in $\mathcal{F}$.

Consequently, $(A,a)$ is weakening defended by $E$ in $\mathcal{F}$.
\end{proof}

\begin{thm}\label{admissible proposition}
Given an FAF $\mathcal{F}=\langle\mathcal{A},\mathcal{R}\rangle$ and a fuzzy set $E\subseteq \mathcal{A}$, it holds that: $\forall C\subseteq \mathcal{A}$, $E\in \mathcal{AE}(\mathcal{F}, C)$ if and only if $\forall S\in SCCS_{\mathcal{F}}$,
\begin{displaymath}
E|_S\in \mathcal{AE}(\mathcal{F}\!\downarrow_{R_{\mathcal{F}}(S,E)},D_{\mathcal{F}}(S,E)\cap C).
\end{displaymath}

\end{thm}
\begin{proof}
$\Rightarrow$: Suppose $E\in \mathcal{AE}(\mathcal{F}, C)$.
Then according to Definition \ref{the admissible extension in C}, $E\subseteq C$ and $E\subseteq F_{\mathcal{F}}(E)$. 
From Lemma \ref{the lemma for the right arrow of admissible}, for any $S\in SCCS_{\mathcal{F}}$ and for any $(A,a)\in E|_S$,
we conclude that $(A,a)\in D_{\mathcal{F}}(S,E)$ and thus $E|_{S}\in D_{\mathcal{F}}(S,E)\cap C$.
Moreover, from the same lemma, $(A,a)$ is weakening defended by $E|_S$ in $\mathcal{F}\!\downarrow_{R_{\mathcal{F}}(S,E)}$.
Since $E$ is admissible and thus clearly conflict-free in $\mathcal{F}$, it entails that $E|_S$ is conflict-free by Definition \ref{restriction of FAF}.
Therefore, $E|_S$ is admissible in ${\mathcal{F}\!\downarrow}_{R_{\mathcal{F}}(S,E)}$.
Consequently, $E|_S\in \mathcal{AE}(\mathcal{F}\!\downarrow_{R_{\mathcal{F}}(S,E)},D_{\mathcal{F}}(S,E)\cap C)$.

$\Leftarrow$: Suppose $\forall S\in SCCS_{\mathcal{F}}$, $E|_S\in \mathcal{AE}(\mathcal{F}\!\downarrow_{R_{\mathcal{F}}(S,E)},D_{\mathcal{F}}(S,E)\cap C)$.
Then $E|_S\subseteq (D_{\mathcal{F}}(S,E)\cap C)$, and thus $E\subseteq C$. 
Now we need to prove that $E$ is admissible in $\mathcal{F}$, i.e., $E$ is conflict-free and $E\subseteq F_\mathcal{F}(E)$.
If $E$ is not conflict-free, then there exists $(A,a),(B,b)\in E$ s.t. $(B,b)$ sufficiently attacks $(A,a)$. We denote $SCC_{\mathcal{F}}(A)$ as $S$. 
It is evident that $B\notin S$; if not, it would contradict the fact that $E|_S$ is admissible and necessarily conflict-free.
So $B\in outpar_\mathcal{F}(S)$ and let $SCC_{\mathcal{F}}(B)$ be denoted as $S'$.
Since $(A,a)\in D_{\mathcal{F}}(S,E)$, it follows that $(B,b)$ is weakened by $E$.
Namely, there exists $(C,c)\in E$ that sufficiently attacks $(B,b)$.
We can again conclude that $C\in outpar_{\mathcal{F}}(S')$, and let $SCC_{\mathcal{F}}(C)$ be denoted as $S''$.
Similarly, from the fact that $(B,b)\in E|_{S'}\in D_\mathcal{F}(S',E)$,
it follows that there exists $(D,d)\in E$ that sufficiently attacks $(C,c)$ and $D\in outpar_{\mathcal{F}}(S'')$.
Since this process would be infinitely repeated yet $SCCS_{\mathcal{F}}$ is finite and acyclic, a contradiction arises.
Therefore, $E$ is conflict-free.
Moreover, consider a generic element $(A,a)\in E$ and let $SCC_{\mathcal{F}}(A)$ be denoted as $\hat{S}$.
Then evidently $(A,a)\in E|_{\hat{S}}$.
From the hypothesis of $E|_{\hat{S}}\in \mathcal{AE}(\mathcal{F}\!\downarrow_{R_{\mathcal{F}}({\hat{S}},E)},D_{\mathcal{F}}({\hat{S}},E)\cap C)$, it follows that $(A,a)\in D_{\mathcal{F}}({\hat{S}},E)$ and $(A,a)\in F_{\mathcal{F}\!\downarrow_{R_{\mathcal{F}}(\hat{S},E)}}(E|_{\hat{S}})$.
Then applying Lemma \ref{the lemma for the left arrow of admissible}, it entails that $(A,a)$ is weakening defended by $E$ in $\mathcal{F}$, i.e., $(A,a)\in F_\mathcal{F}(E)$.
Consequently, $E\in \mathcal{AE}(\mathcal{F},C)$.
\end{proof}

\subsection{Complete Semantics}

The following theorem shows that complete semantics also fit the decomposition schema.

\begin{thm}\label{Complete Proposition}
Given an FAF $\mathcal{F}=\langle\mathcal{A},\mathcal{R}\rangle$ and a fuzzy set $E\subseteq \mathcal{A}$,
it holds that: $\forall C\subseteq \mathcal{A}$, $E\in \mathcal{CE}(\mathcal{F}, C)$ if and only if $\forall S\in SCCS_{\mathcal{F}}$,
\begin{displaymath}
E|_S\in \mathcal{CE}(\mathcal{F}\!\downarrow_{R_{\mathcal{F}}(S,E)},D_{\mathcal{F}}(S,E)\cap C).
\end{displaymath}
\end{thm}

\begin{proof}
$\Rightarrow$: Suppose $E\in \mathcal{CE}(\mathcal{F}, C)$. Then $E\in \mathcal{AE}(\mathcal{F}, C)$, and thus Theorem \ref{admissible proposition} entails that $\forall S\in SCCS_\mathcal{F}$,  $E|_S\in \mathcal{AE}(\mathcal{F}\!\downarrow_{R_{\mathcal{F}}(S,E)},D_{\mathcal{F}}(S,E)\cap C)$.
As a consequence, we need to prove that for any $(A,a)\in (D_\mathcal{F}(S,E)\cap C)$, if $(A,a)$ is weakening defended by $E|_S$ in $\mathcal{F}\!\downarrow_{R_{\mathcal{F}}(S,E)}$, then $(A,a)\in E|_S$.
It is evident that Lemma \ref{the lemma for the left arrow of admissible} can be applied to $(A,a)$, leading to $(A,a)\in F_\mathcal{F}(E)$.
Moreover, $(A,a)\in (D_\mathcal{F}(S,E)\cap C)$ implies that $(A,a)\in C$.
Consequently, from the hypothesis that $E\in \mathcal{CE}(\mathcal{F},C)$, it follows that $(A,a)\in E$, and therefore $(A,a)\in E|_S$.

$\Leftarrow$: Suppose that $\forall S\in SCCS_{\mathcal{F}}$, 
$E|_S\in \mathcal{CE}(\mathcal{F}\!\downarrow_{R_{\mathcal{F}}(S,E)},D_{\mathcal{F}}(S,E)\cap C)$.
Then Theorem \ref{admissible proposition} entails that $E\in \mathcal{AE}(\mathcal{F},C)$.
Therefore, we only need to prove that for any $(A,a)\in C$, if $(A,a)$ is weakening defended by $E$ in $\mathcal{F}$, 
then $(A,a)\in E$.
Denoting $SCC_\mathcal{F}(A)$ as $\hat{S}$, then Lemma \ref{the lemma for the right arrow of admissible} entails that:
\begin{itemize}
    \item $(A,a)\in D_\mathcal{F}(\hat{S},E)$, so that $(A,a)\in (D_\mathcal{F}(\hat{S},E)\cap C)$;
    \item $(A,a)$ is weakening defended by $E|_{\hat{S}}$ in $\mathcal{F}\!\downarrow_{R_{\mathcal{F}}(\hat{S},E)}$. 
\end{itemize}
Then from the hypothesis that $E|_{\hat{S}}\in \mathcal{CE}(\mathcal{F}\!\downarrow_{R_{\mathcal{F}}(\hat{S},E)},D_{\mathcal{F}}(\hat{S},E)\cap C)$, $(A,a)\in E|_{\hat{S}}$.
Consequently, $(A,a)\in E$.
\end{proof}

\subsection{Preferred Semantics}

In this subsection, we demonstrate that preferred semantics also fit the decomposition schema,
as shown by Theorem \ref{Preferred Proposition} based on the following lemma.

\begin{lem}\label{Preferred Lemma}
Let $\mathcal{F} =\langle\mathcal{A},\mathcal{R}\rangle$ be an FAF, $E\in \mathcal{AE}(\mathcal{F})$ and $S\in SCCS_{\mathcal{F}}$.
Then for any $\hat{E}\subseteq \mathcal{A}$, if $\hat{E}$ satisfies the following conditions: 
\begin{itemize}
\setlength{\itemsep}{0pt}
\setlength{\parsep}{0pt}
\setlength{\parskip}{0pt}
  \item $E|_S\subseteq \hat{E} \subseteq D_{\mathcal{F}}(S,E)$, and
  \item $\hat{E}$ is admissible in $\mathcal{F}\!\downarrow_{R_{\mathcal{F}}(S,E)}$, i.e., $\hat{E}\in \mathcal{AE}(\mathcal{F}\!\downarrow_{R_{\mathcal{F}}(S,E)})$,
\end{itemize}
then $E\cup \hat{E}$ is admissible in $\mathcal{F}$. 
\end{lem}

\begin{proof}
First, we prove that $(E\cup \hat{E})$ is conflict-free in $\mathcal{F}$.
By the hypothesis that $\hat{E}$ is admissible in $\mathcal{F}\!\downarrow_{R_{\mathcal{F}}(S,E)}$,
we derive that $\hat{E}$ is conflict-free in $\mathcal{F}\!\downarrow_{R_{\mathcal{F}}(S,E)}$.
Then $\hat{E}$ is conflict-free in $\mathcal{F}$ by Definition \ref{restriction of FAF}.
$E$ is also conflict-free in $\mathcal{F}$ by the hypothesis of $E\in \mathcal{AE}(\mathcal{F})$.
Now we have to prove that there exists no $(A,a)\in E$ and $(B,b)\in \hat{E}$ s.t. $(A,a)$ sufficiently attack $(B,b)$ or $(B,b)$ sufficiently attack $(A,a)$.
Since $E$ is admissible in $\mathcal{F}$, $(B,b)$ sufficiently attacks $(A,a)$ entails that $\exists (C,c)\in E$ sufficiently attacks $(B,b)$, and therefore we have only to consider the case that $(A,a)$ sufficiently attacks $(B,b)$.
Suppose $(A,a)$ sufficiently attacks $(B,b)$.
Then from the hypothesis that $\hat{E}\subseteq D_{\mathcal{F}}(S,E)$, $(A,a)\in E|_{S}$.
However, this is impossible since $E|_S\subseteq \hat{E}$ and $\hat{E}$ is conflict-free.
Hence, $(E\cup \hat{E})$ is conflict-free.

Now, with reference to $\mathcal{F}$, we have to prove that $\forall (A,a)\in (E\cup \hat{E})$, for any $(B,b)$ sufficiently attacks $(A,a)$, $(E\cup \hat{E})$ weakens $(B,b)$ to $(B,b')$, which tolerably attacks $(A,a)$.
In case $(A,a)\in E$, the conclusion follows from $E\in \mathcal{AE}(\mathcal{F})$.
On the other hand, if $(A,a)\in \hat{E}$, we have $B\in (outpar_{\mathcal{F}}(S)\cup S)$.
If $B\in outpar_{\mathcal{F}}(S)$, then taking into account that $\hat{E}\subseteq D_{\mathcal{F}}(S,E)$, it must be the case that $E$ weakens $(B,b)$ to $(B,b')$.  
If $B\in S$, we distinguish two cases:
\begin{enumerate}
\setlength{\itemsep}{0pt}
\setlength{\parsep}{0pt}
\setlength{\parskip}{0pt}
  \item If $(B,b)\in R_{\mathcal{F}}(S,E)$, then $(B,b)$ sufficiently attacks $(A,a)$ in $\mathcal{F}\!\downarrow_{R_{\mathcal{F}}(S,E)}$ by Definition \ref{restriction of FAF}. The hypothesis that $\hat{E}\in \mathcal{AE}(\mathcal{F}\!\downarrow_{R_{\mathcal{F}}(S,E)})$ entails that $\hat{E}$ weakens $(B,b)$ to $(B,b')$ within $\mathcal{F}\!\downarrow_{R_{\mathcal{F}}(S,E)}$, and therefore the relation also holds in $\mathcal{F}$.
  \item If $(B,b)\notin R_{\mathcal{F}}(S,E)$, then according to the definition of $R_{\mathcal{F}}(S,E)$, it must be the case that $E$ weakens $(B,b)$ to $(B,b^*)$ s.t. $(B,b^*)\in R_{\mathcal{F}}(S,E)$, which returns to case 1.
\end{enumerate}
Consequently, $(E\cup \hat{E})$ is admissible in $\mathcal{F}$, i.e., $(E\cup \hat{E})\in \mathcal{AE}(\mathcal{F})$.
\end{proof}

\begin{thm}\label{Preferred Proposition}
Given an FAF $\mathcal{F}=\langle\mathcal{A},\mathcal{R}\rangle$ and a fuzzy set $E\subseteq \mathcal{A}$,
it holds that: $\forall C\subseteq \mathcal{A}$, $E\in \mathcal{PE}(\mathcal{F}, C)$ if and only if $\forall S\in SCCS_{\mathcal{F}}$,
\begin{displaymath}
E|_S\in \mathcal{PE}(\mathcal{F}\!\downarrow_{R_{\mathcal{F}}(S,E)},D_{\mathcal{F}}(S,E)\cap C).
\end{displaymath}
\end{thm}
\begin{proof}
$\Rightarrow$: Suppose $E\in \mathcal{PE}(\mathcal{F}, C)$.
Then $E\in \mathcal{AE}(\mathcal{F}, C)$ by Definition \ref{the preferred extension in C}.
It directly follows from Theorem \ref{admissible proposition} that 
  $E|_S\in \mathcal{AE}(\mathcal{F}\!\downarrow_{R_{\mathcal{F}}(S,E)},D_{\mathcal{F}}(S,E)\cap C)$ for any $S\in SCCS_{\mathcal{F}}$.
Assuming that $\exists \hat{S}\in SCCS_{\mathcal{F}}$ such that $E|_{\hat{S}}$ is not maximal among the elements in $\mathcal{AE}(\mathcal{F}\!\downarrow_{R_{\mathcal{F}}(\hat{S},E)},D_{\mathcal{F}}(\hat{S},E)\cap C)$,
then according to Proposition \ref{the existense of preferred extensions}, there must be a fuzzy set $\hat{E}$ such that
\begin{itemize}
  \item $(E\cap \hat{S})\subsetneq \hat{E} \subseteq (D_{\mathcal{F}}(\hat{S},E)\cap C)$;
  \item $\hat{E}\in \mathcal{AE}(\mathcal{F}\!\downarrow_{R_{\mathcal{F}}(\hat{S},E)},D_{\mathcal{F}}(\hat{S},E)\cap C)$.
\end{itemize}
Lemma \ref{Preferred Lemma} entails that $(E\cup \hat{E})$ is admissible in $\mathcal{F}$.
Moreover, since both $E$ and $\hat{E}$ are contained in $C$, it follows that $(E\cup \hat{E})\subseteq C$, and therefore $(E\cup \hat{E})\in \mathcal{AE}(\mathcal{F},C)$.
However, it is evident that $E\subsetneq (E\cup \hat{E})$, which contradicts that $E$ is preferred.
Consequently, $E|_S\in \mathcal{PE}(\mathcal{F}\!\downarrow_{R_{\mathcal{F}}(S,E)},D_{\mathcal{F}}(S,E)\cap C)$.

$\Leftarrow$: Assume that $\forall S\in SCCS_{\mathcal{F}}$,
$E|_S\in \mathcal{PE}(\mathcal{F}\!\downarrow_{R_{\mathcal{F}}(\hat{S},E)},D_{\mathcal{F}}(\hat{S},E)\cap C)$.
Then we derive that $E\in \mathcal{AE}(\mathcal{F},C)$ by Theorem \ref{admissible proposition}.
To prove that $E$ is preferred, we reason by contradiction.
Suppose that $\exists E'\subseteq C$ s.t. $E\subsetneq E'$ and $E'\in \mathcal{PE}(\mathcal{F},C)$.
Since $E\subsetneq E'$, there must be at least an SCC $\hat{S}\in SCCS_{\mathcal{F}}$ s.t. $E|_{\hat{S}}\subsetneq E'|_{\hat{S}}$ and $\forall S\in sccanc_{\mathcal{F}}(\hat{S}), E'|_S=E|_S$.
Since $E'\in \mathcal{AE}(\mathcal{F},C)$, it follows from Theorem \ref{admissible proposition} that
$E'|_{\hat{S}}\in \mathcal{AE}(\mathcal{F}\!\downarrow_{R_{\mathcal{F}}(\hat{S},E')},D_{\mathcal{F}}(\hat{S},E')\cap C)$.
It is easy to see that $D_{\mathcal{F}}(\hat{S},E')=D_{\mathcal{F}}(\hat{S},E)$ and $R_{\mathcal{F}}(\hat{S},E')=R_{\mathcal{F}}(\hat{S},E)$.
Therefore, $E'|_{\hat{S}}\in \mathcal{AE}(\mathcal{F}\!\downarrow_{R_{\mathcal{F}}(\hat{S},E)},D_{\mathcal{F}}(\hat{S},E)\cap C)$, which contradicts the hypothesis that $E|_{\hat{S}}\subsetneq E'|_{\hat{S}}$ and $E|_{\hat{S}}\in \mathcal{PE}(\mathcal{F}\!\downarrow_{R_{\mathcal{F}}({\hat{S}},E)},D_{\mathcal{F}}({\hat{S}},E)\cap C)$.
Consequently, $E\in \mathcal{PE}(\mathcal{F}, C)$.
\end{proof}

\subsection{Grounded Semantics}

Finally, in this subsection, we prove that the grounded semantics also fit the decomposition schema, as shown by the following Theorem \ref{Grounded Proposition}.

\begin{thm}\label{Grounded Proposition}
Given an FAF $\mathcal{F}=\langle\mathcal{A},\mathcal{R}\rangle$ and a fuzzy set $E\subseteq \mathcal{A}$,
it holds that: $\forall C\subseteq \mathcal{A}$, $E= GE(\mathcal{F}, C)$ if and only if $\forall S\in SCCS_{\mathcal{F}}$,
\begin{displaymath}
E|_S= GE(\mathcal{F}\!\downarrow_{R_{\mathcal{F}}(S,E)},D_{\mathcal{F}}(S,E)\cap C).
\end{displaymath}
\end{thm}

\begin{proof}
$\Rightarrow$: Suppose $E = GE(\mathcal{F},C)$.
From Proposition \ref{Grounded}, $E$ is a complete fuzzy extension in $C$, i.e., $E\in \mathcal{CE}(\mathcal{F},C)$.
Therefore, Theorem \ref{Complete Proposition} entails that
\begin{center}
$\forall S\in SCCS_{\mathcal{F}}$, $E|_S\in \mathcal{CE}(\mathcal{F}\!\downarrow_{R_{\mathcal{F}}(S,E)},D_\mathcal{F}(S,E)\cap C)$.
\end{center}
Taking into account Proposition \ref{Grounded}, we have to prove that $\forall S\in SCCS_{\mathcal{F}}$, $E|_S$ is the least element in $\mathcal{CE}(\mathcal{F}\!\downarrow_{R_{\mathcal{F}}(S,E)},D_{\mathcal{F}}(S,E)\cap C)$.
Let us reason by contradiction.
Assume that $E|_S$ is not the least element for some $S\in SCCS_\mathcal{F}$.
Then we can choose an SCC $\hat{S}\in SCCS_{\mathcal{F}}$ such that
\begin{displaymath}
 \forall S\in sccanc_{\mathcal{F}}(\hat{S}), E|_S=GE(\mathcal{F}\!\downarrow _{R_{\mathcal{F}}(S,E)},D_{\mathcal{F}}(S,E)\cap C) 
\end{displaymath}
and
\begin{displaymath}
  \exists \hat{E}\subsetneq E|_{\hat{S}} \text{ s.t. } \hat{E}=GE(\mathcal{F}\!\downarrow _{R_{\mathcal{F}}(\hat{S},E)},D_{\mathcal{F}}(\hat{S},E)\cap C).
\end{displaymath}
Note that $\hat{E}$ must exist based on Proposition \ref{Grounded Lemma} and Proposition \ref{Grounded}.
Now we can construct a complete fuzzy extension $E'$ such that:
\begin{itemize}
\setlength{\itemsep}{0pt}
\setlength{\parsep}{0pt}
\setlength{\parskip}{0pt}
  \item $\forall S\in sccanc_{\mathcal{F}}(\hat{S}), E'|_S = E|_S$;
  \item $E'|_{\hat{S}} = \hat{E}$;
  \item $\forall S\in SCCS_{\mathcal{F}},E'|_S = GE(\mathcal{F}\!\downarrow_{R_{\mathcal{F}}(S,E')},D_{\mathcal{F}}(S,E')\cap C)$.
\end{itemize}
To this purpose, we first construct a fuzzy set $E^{*}$ over $(sccanc_{\mathcal{F}}(\hat{S})\cup \hat{S})$ that satisfies the first two conditions: 
$\forall S\in sccanc_{\mathcal{F}}(\hat{S})$, $E^*|_S = E|_S$ and $E^*|_{\hat{S}} = \hat{E}$.
It is obvious that  $\forall S\in(\hat{S}\cup sccanc_{\mathcal{F}}(\hat{S}))$, $D_{\mathcal{F}}(S,E^*) = D_{\mathcal{F}}(S,E)$ and $R_{\mathcal{F}}(S,E^*)=R_{\mathcal{F}}(S,E)$.
Therefore, $E^*$ also satisfies the third condition for any $S\in(\hat{S}\cup sccanc_{\mathcal{F}}(\hat{S}))$, i.e., $$\forall S\in(\hat{S}\cup sccanc_{\mathcal{F}}(\hat{S})), E^*|_S = GE(\mathcal{F}\!\downarrow_{R_{\mathcal{F}}(S,E^*)},D_{\mathcal{F}}(S,E^*)\cap C).$$
Moreover, it follows from Proposition \ref{Grounded Lemma} that 
$GE(\mathcal{F}\!\downarrow_{R_{\mathcal{F}}(S,E')},D_{\mathcal{F}}(S,E')\cap C)$ always exists for any $S\in SCCS_{\mathcal{F}}$.
Hence, $E'$ can be obtained constructively from $E^*$ by proceeding along the SCCs following $(\hat{S}\cup sccanc_{\mathcal{F}}(\hat{S}))$.
Then on the basis of Proposition \ref{Grounded}, we have that
\begin{displaymath}
    \forall S\in SCCS_{\mathcal{F}}, E'|_S\in\mathcal{CE}(\mathcal{F}\!\downarrow_{R_{\mathcal{F}}(S,E')},D_{\mathcal{F}}(S,E')\cap C).
\end{displaymath}
Thus, according to Theorem \ref{Complete Proposition}, we derive that $E'\in \mathcal{CE}(\mathcal{F},C)$.
Since  $E'|_{\hat{S}}=\hat{E}\subsetneq E|_{\hat{S}}$, it can be concluded that $E$ is not contained in $E'$, which contradicts that $E = GE(\mathcal{F},C)$.
Consequently, $\forall S\in SCCS_{\mathcal{F}}$,
$E|_S= GE(\mathcal{F}\!\downarrow_{R_{\mathcal{F}}(S,E)},D_{\mathcal{F}}(S,E)\cap C)$.

$\Leftarrow$: Suppose that $\forall S\in SCCS_{\mathcal{F}}$,
$E|_S\in GE(\mathcal{F}\!\downarrow_{R_{\mathcal{F}}(S,E)},D_{\mathcal{F}}(S,E)\cap C)$.
On the basis of Proposition \ref{Grounded}, it follows that $$\forall S\in SCCS_{\mathcal{F}},
E|_S\in \mathcal{CE}(\mathcal{F}\!\downarrow_{R_{\mathcal{F}}(S,E)},D_{\mathcal{F}}(S,E)\cap C).$$
Therefore, Theorem \ref{Complete Proposition} entails that $E\in \mathcal{CE}(\mathcal{F},C)$.
According to Proposition \ref{Grounded}, we only need to prove that $E$ is the least among the elements in $\mathcal{CE}(\mathcal{F},C)$.
Let us reason by contradiction.
Assuming that the grounded fuzzy extension $E' = GE(\mathcal{F},C)$, which must exist by Proposition \ref{Grounded Lemma}
and is a subset of $E$ by Proposition \ref{Grounded}, is strictly included in $E$.
Thus, there must be at least an SCC $S$ such that $E'|_S\subsetneq E|_S$.
Therefore, we can choose an SCC $\hat{S}$ such that 
$$\forall S\in sccanc_{\mathcal{F}}(\hat{S}), E'|_ S=E|_S \text{ and } E'|_{\hat{S}}\subsetneq E|_{\hat{S}}.$$
Moreover, since $E' = GE(\mathcal{F},C)\in \mathcal{CE}(\mathcal{F},C)$, Theorem \ref{Complete Proposition} applied to $\hat{S}$ entails that
$E'|_{\hat{S}}\in \mathcal{CE}(\mathcal{F}\!\downarrow_{R_{\mathcal{F}}(\hat{S},E')},D_{\mathcal{F}}(\hat{S},E')\cap C)$.
It is easy to see that $D_{\mathcal{F}}(\hat{S},E') = D_{\mathcal{F}}(\hat{S},E)$ and $R_{\mathcal{F}}(\hat{S},E') = R_{\mathcal{F}}(\hat{S},E)$.
Therefore, we obtain that $E'|_{\hat{S}}\in \mathcal{CE}(\mathcal{F}\!\downarrow_{R_{\mathcal{F}}(\hat{S},E)},D_{\mathcal{F}}(\hat{S},E)\cap C)$.
However, the assumption $E'|_{\hat{S}}\subsetneq E|_{\hat{S}}$ contradicts the hypothesis that $E|_{\hat{S}}\in GE(\mathcal{F}\!\downarrow_{R_{\mathcal{F}}(\hat{S},E)},D_{\mathcal{F}}(\hat{S},E)\cap C)$.
Consequently, $E= GE(\mathcal{F}, C)$.
\end{proof}

%

\subsection{SCC-recursiveness of Fuzzy Extension Semantics}

On the basis of the results obtained in the previous subsections, we characterize that admissible, complete, preferred, and grounded fuzzy extension semantics are SCC-recursive by identifying the base functions in the theorem below. 

\begin{thm}\label{Characterize semantics as SCC-recursive}
The admissible, complete, preferred and grounded semantics are SCC-recursive,
characterized by the following base functions: 
\begin{itemize}
\setlength{\itemsep}{0pt}
\setlength{\parsep}{0pt}
\setlength{\parskip}{0pt}
  \item $\mathcal{BF}_{\mathcal{AD}} (\mathcal{F},C) \equiv \mathcal{AE}(\mathcal{F},C)$;
  \item $\mathcal{BF}_{\mathcal{CO}} (\mathcal{F},C) \equiv \mathcal{CE}(\mathcal{F},C)$;
  \item $\mathcal{BF}_{\mathcal{PR}} (\mathcal{F},C) \equiv \mathcal{PE}(\mathcal{F},C)$;
  \item $\mathcal{BF}_{\mathcal{GR}} (\mathcal{F},C) \equiv \{GE(\mathcal{F},C)\}$.
\end{itemize}

\end{thm}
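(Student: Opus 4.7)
The plan is to strengthen the claim to an assertion indexed by an arbitrary set $C$, and then induct on the size of $Args$. Specifically, for each of the four semantics $\mathcal{S}$ I would prove by strong induction on $|Args|$ that $\mathcal{E}_{\mathcal{S}}(FAF, C) = \mathcal{GF}(FAF, C)$ for every $FAF = \langle\mathcal{A}, \rho\rangle$ and every $C \subseteq \mathcal{A}$. The theorem itself then follows by instantiating $C = \mathcal{A}$, since the generalized definitions in Section 4.1 recover the original extension semantics in that case.

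For the base of the induction, consider an FAF with $|SCCS_{FAF}| = 1$. The first clause in the definition of $\mathcal{GF}$ gives $\mathcal{GF}(FAF, C) = \mathcal{BF}_{\mathcal{S}}(FAF, C)$, and the theorem's specification of the base function identifies the right-hand side with $\mathcal{AE}(FAF, C)$, $\mathcal{CE}(FAF, C)$, $\mathcal{PE}(FAF, C)$, or $\{GE(FAF, C)\}$ respectively. So the identity holds by construction in the base case.

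For the inductive step, suppose $|SCCS_{FAF}| > 1$ and the claim has been established for all FAFs with a strictly smaller argument set. I would invoke Proposition \ref{admissible proposition}, \ref{Complete Proposition}, \ref{Preferred Proposition}, or \ref{Grounded Proposition} depending on the semantics to obtain the equivalence: $E \in \mathcal{E}_{\mathcal{S}}(FAF, C)$ if and only if for every $S \in SCCS_{FAF}$, $(E\cap S) \in \mathcal{E}_{\mathcal{S}}(FAF\downarrow_{R_{FAF}(S,E)}, D_{FAF}(S,E)\cap C)$. Because there are at least two SCCs and $R_{FAF}(S,E) \subseteq S$, each restricted framework has strictly fewer arguments than $FAF$, so the induction hypothesis converts the right-hand side into $(E\cap S) \in \mathcal{GF}(FAF\downarrow_{R_{FAF}(S,E)}, D_{FAF}(S,E)\cap C)$. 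This matches the second clause of the definition of $\mathcal{GF}(FAF, C)$ exactly, yielding $E \in \mathcal{GF}(FAF, C)$.

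The main obstacle is largely bookkeeping rather than deep mathematics: one must verify that $\mathcal{E}_{\mathcal{S}}(FAF) = \mathcal{E}_{\mathcal{S}}(FAF, \mathcal{A})$ in each of the four cases (immediate from Definitions \ref{the admissible extension in C}, \ref{the complete extension in C}, \ref{the preferred extension in C}, and \ref{the grounded extension in C}), and treat the grounded case's singleton-valued base function so that ``$E = GE(\cdot, \cdot)$'' and ``$E \in \{GE(\cdot, \cdot)\}$'' are handled uniformly. Once these identifications are in place, the four sub-proofs follow the template above and reduce to an appeal to the already-proved Propositions \ref{admissible proposition}--\ref{Grounded Proposition}.
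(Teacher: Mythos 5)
Your proposal is correct and takes essentially the same route as the paper's proof: the paper likewise identifies the single-SCC case with the stated base functions, invokes Propositions \ref{admissible proposition}, \ref{Complete Proposition}, \ref{Preferred Proposition} and \ref{Grounded Proposition} for the decomposition along strongly connected components, and recovers the original semantics by setting $C=\mathcal{A}$. The only difference is that you make explicit, via strong induction on $|Args|$ using $R_{FAF}(S,E)\subseteq S\subsetneq \mathcal{A}$, the well-founded recursion that the paper leaves implicit in its ``it is easy to see that $\mathcal{CO}(FAF,C)$ adheres to Definition \ref{The base definition of SCC-recursiveness in FAF}'' step.
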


\begin{proof}
We prove the claim for complete semantics here; the proofs for other semantics are similar.
Now we check Definition \ref{The base definition of SCC-recursiveness in FAF} by $\mathcal{CE}(\mathcal{F},C)$.
Given an FAF $\mathcal{F}=\langle \mathcal{A}, \mathcal{R}\rangle$, as noticed before, it is easy to see that $E\in \mathcal{E}_{\mathcal{CO}}(\mathcal{F})$ if and only if $E\in \mathcal{CE}(\mathcal{F},\mathcal{A})$.
Furthermore, if $|SCCS_{\mathcal{F}}| = 1$, then $\mathcal{CE}(\mathcal{F},C)$ coincides by definition with the base function $\mathcal{BF}_{\mathcal{CO}} (\mathcal{F},C)$.
Otherwise, the decomposition schema along the SCCs follows from Theorem \ref{Complete Proposition}.
\end{proof}

\section{Illustrating Example for SCC-recursive Schema}\label{Illustrating Example}




In this section, we use examples to illustrate the process of computing semantics using the SCC-recursive schema.
Each FAF is recursively decomposed into many reduced sub-frameworks along the SCCs, enabling the efficient computation of the semantics of the original FAF based on these reduced sub-frameworks.

\begin{exmp}\label{Illustrating Example 1}
Consider an FAF $\mathcal{F}=\langle \mathcal{A},\mathcal{R}\rangle$ depicted in Figure \ref{Figure: Illustrating Example 1}, where
\begin{align*}
\mathcal{A}&=\{A_{0.8}, B_{1.0},C_{0.9}, D_{1.0},E_{1.0},F_{0.6}\}\\
\mathcal{R}&=\{A\xrightarrow{1.0} B, B\xrightarrow{1.0}C, C\xrightarrow{1.0}B, C\xrightarrow{1.0}D,D\xrightarrow{1.0}E,E\xrightarrow{1.0}F,F\xrightarrow{1.0}D\}
\end{align*}

\begin{figure}[htp]
\begin{center}
\subfloat[Before decomposition]{\label{Before decomposition}
\scalebox{0.75}{
\begin{tikzpicture}
[roundnode/.style={circle, draw=black!100, fill=white!5, thick, minimum size=2mm},
rectanglenode/.style={rectangle, draw=black!0, fill=white!5, thick, minimum size=0mm},
]
\node[roundnode]      (1) at(-4.5,0)      {$A: {0.8}$};
\node[roundnode]      (2) at(-1.5,0)      {$B: {1.0}$};
\node[roundnode]      (3) at(1.5,0)      {$C: {0.9}$};
\node[roundnode]      (4) at(4.5,0)      {$D: {1.0}$};
\node[roundnode]      (5) at(7,1.2)      {$E: {1.0}$};
\node[roundnode]      (6) at(7,-1.2)      {$F: {0.6}$};
\draw[thick, ->] (1)--(2);
\draw[thick, ->]  (-0.72,0.07)--(0.71,0.07);
\draw[thick, <-]  (-0.71,-0.07)--(0.72,-0.07);
\draw[thick, ->]  (3)--(4);
\draw[thick, ->]  (4)--(5);
\draw[thick, ->]  (5)--(6);
\draw[thick, ->]  (6)--(4);

\node at(0,-2.0) {};
\end{tikzpicture}
}
}

\subfloat[Intermediate decomposition]{
\scalebox{0.75}{
\begin{tikzpicture}
[roundnode/.style={circle, draw=black!100, fill=white!5, thick, minimum size=2mm},
rectanglenode/.style={rectangle, draw=black!0, fill=white!5, thick, minimum size=0mm},
]
\node[roundnode]      (1) at(-4.5,0)      {$A: {0.8}$};
\node[roundnode]      (2) at(-1.5,0)      {$B: {0.2}$};
\node[roundnode]      (3) at(1.5,0)      {$C: {0.9}$};
\node[roundnode]      (4) at(4.5,0)      {$D: {0.2}$};
\node[roundnode]      (5) at(7,1.2)      {$E: {1.0}$};
\node[roundnode]      (6) at(7,-1.2)      {$F: {0.6}$};
\draw[thick, ->] (4)--(5);
\draw[thick, ->] (5)--(6);
\draw[thick, ->]  (-0.72,0.07)--(0.71,0.07);
\draw[thick, <-]  (-0.71,-0.07)--(0.72,-0.07);
\node at(-4.5,-1.2) {$\mathcal{F}_1$};
\node at(0,-1.2) {$\mathcal{F}_{2}$};
\node at(5.45,-1.5) {$\mathcal{F}_{3}$};
\draw[dashed, thick] (-5.6,1.2) rectangle (-3.4,-1.5);
\draw[dashed, thick] (-2.7,1.2) rectangle (2.7,-1.5);
\draw[dashed, thick] (3.3,2.2) rectangle (8.2,-2.1);
\node at(0,-3.) {};
\end{tikzpicture}
}
}

\subfloat[After decomposition]{\label{After decomposition}
\scalebox{0.75}{
\begin{tikzpicture}
[roundnode/.style={circle, draw=black!100, fill=white!5, thick, minimum size=2mm},
rectanglenode/.style={rectangle, draw=black!0, fill=white!5, thick, minimum size=0mm},
]
\node[roundnode]      (1) at(-4.5,0)      {$A: {0.8}$};
\node[roundnode]      (2) at(-1.5,0)      {$B: {0.2}$};
\node[roundnode]      (3) at(1.5,0)      {$C: {0.9}$};
\node[roundnode]      (4) at(4.5,0)      {$D: {0.2}$};
\node[roundnode]      (5) at(7,1.2)      {$E: {0.9}$};
\node[roundnode]      (6) at(7,-1.2)      {$F: {0.2}$};
\draw[thick, ->]  (-0.72,0.07)--(0.71,0.07);
\draw[thick, <-]  (-0.71,-0.07)--(0.72,-0.07);
\node at(-4.5,-1.2) {$\mathcal{F}_1$};
\node at(0,-1.2) {$\mathcal{F}_{2}$};
\node at(4.5,-1.2) {$\mathcal{F}_{3_3}$};
\node at(7,0.) {$\mathcal{F}_{3_1}$};
\node at(7,-2.4) {$\mathcal{F}_{3_2}$};
\node at(0,-1.8) {};
\node at(0,-1.8) {};
\node at(1.8,-1.2) {};
\node at(-1.8,-1.2) {};
\end{tikzpicture}
}
}
\end{center}
\caption{SCC-recursive decomposition in Example \ref{Illustrating Example 1}}
\label{Figure: Illustrating Example 1}
\end{figure}

In this example, we discuss grounded semantics for illustration.
First, $\mathcal{F}$ can be partitioned into three SCCs:
$S_1=\{A\}, S_2=\{B,C\}, S_3=\{D,E,F\}$.
We compute the grounded fuzzy extension $E$ following the sequence of SCCs.

For the initial SCC $S_1$, it is easy to see that  
$$R_{\mathcal{F}}(S_1,E)=D_{\mathcal{F}}(S_1,E)=\{(A,0.8)\}.$$
According to Definition \ref{restriction of FAF}, it yields the first sub-framework by restricting $\mathcal{F}$
to $R_{\mathcal{F}}(S_1,E)$:
$$\mathcal{F}_1=\mathcal{F}\!\downarrow_{\mathcal{R}_\mathcal{F}(S_1,E)}=\langle \{A_{0.8}\},\varnothing\rangle.$$
Since $|SCCS_{\mathcal{F}_1}|=1$, the base function $\mathcal{BF}_{\mathcal{GR}}$ is invoked.
Consequently, we obtain $E|_{S_1}=\{(A,0.8)\}$ by applying the function $\mathcal{BF}_{\mathcal{GR}}(\mathcal{F}_1,D_{\mathcal{F}}(S_1,E))$.

Next, we consider the SCC $S_2=\{B,C\}$.
Given that $S_1$ attacks $S_2$ and $E|_{S_1}=\{(A,0.8)\}$, we derive 
    $$R_{\mathcal{F}}(S_2,E)=D_{\mathcal{F}}(S_2,E)=\{(B,0.2),(C,0.9)\}.$$
According to Definition \ref{restriction of FAF}, the restriction of $\mathcal{F}$ to $\mathcal{R}_\mathcal{F}(S_2,E)$ is
$$\mathcal{F}_2=\mathcal{F}\!\downarrow_{\mathcal{R}_\mathcal{F}(S_2,E)}=\langle \{B_{0.2},C_{0.9}\},\{B\xrightarrow{1.0}C, C\xrightarrow{1.0}B\}\rangle.$$
Since $|SCCS_{\mathcal{F}_2}|=1$, the base function $\mathcal{BF}_{\mathcal{GR}}(\mathcal{F}_2,D_{\mathcal{F}}(S_2,E))$ is invoked, which leads to $E|_{S_2}=\{(B,0.1),(C,0.8)\}$.

As far as the SCC $S_3$ is concerned, since $S_{2}$ attacks $S_3$ and $E|_{S_{2}}=\{(B,0.1),(C,0.8)\}$, we have
  \begin{align*}
      R_{\mathcal{F}}(S_3,E)&=\{(D,0.2),(E,1.0),(F,0.6)\}\\
      D_{\mathcal{F}}(S_3,E)&=\{(D,0.1),(E,1.0),(F,0.6)\}.
  \end{align*}
Given the attack from $(F,0.6)$ to $(D,0.2)$ is always tolerable, the restriction of $\mathcal{F}$ to $\mathcal{R}_\mathcal{F}(S_3,E)$ is
$$\mathcal{F}_3=\mathcal{F}\!\downarrow_{\mathcal{R}_\mathcal{F}(S_3,E)}=\langle \{D_{0.2},E_{1.0},F_{0.6}\},\{D\xrightarrow{1.0}E, E\xrightarrow{1.0}F\}\rangle.$$
Then $\mathcal{F}_3$ can be decomposed into three SCCs: $S_{3_1}=\{D\}$, $S_{3_2}=\{E\}$, $S_{3_3}=\{F\}$.
By definition, it can be concluded that
\begin{itemize}
\item for $S_{3_1}$
    \begin{itemize}
     \item  $R_{\mathcal{F}_3}(S_{3_1},E|_{S_3})=\{(D,0.2)\}$;
     \item $D_{\mathcal{F}_3}(S_{3_1},E|_{S_3})\cap D_{\mathcal{F}}(S_3,E)=\{(D,0.1)\}$;
    \item $\mathcal{F}_{3_1}=\mathcal{F}_3\!\downarrow_{\mathcal{R}_{\mathcal{F}_3}(S_{3_1},E|_{S_3})}=\langle\{D_{0.2}\},\varnothing\rangle$;
    \item $E|_{S_{3_1}}=\{(D,0.1)\}$.
    \end{itemize}
\item for $S_{3_2}$
    \begin{itemize}
     \item  $R_{\mathcal{F}_3}(S_{3_2},E|_{S_3})=\{(E,0.9)\}$;
     \item $D_{\mathcal{F}_3}(S_{3_2},E|_{S_3})\cap D_{\mathcal{F}}(S_3,E)=\{(E,0.8)\}$;
    \item $\mathcal{F}_{3_2}=\mathcal{F}_3\!\downarrow_{\mathcal{R}_{\mathcal{F}_3}(S_{3_2},E|_{S_3})}=\langle\{E_{0.9}\},\varnothing\rangle$;
    \item $E|_{S_{3_2}}=\{(E,0.8)\}$.
    \end{itemize}
\item for $S_{3_3}$
    \begin{itemize}
     \item  $R_{\mathcal{F}_3}(S_{3_3},E|_{S_3})=\{(F,0.2)\}$;
     \item $D_{\mathcal{F}_3}(S_{3_3},E|_{S_3})\cap D_{\mathcal{F}}(S_3,E)=\{(F,0.1)\}$;
    \item $\mathcal{F}_{3_1}=\mathcal{F}_3\!\downarrow_{\mathcal{R}_{\mathcal{F}_3}(S_{3_3},E|_{S_3})}=\langle\{F_{0.2}\},\varnothing\rangle$;
    \item $E|_{S_{3_3}}=\{(F,0.1)\}$.
    \end{itemize}
\end{itemize}

As a result, $\mathcal{F}$ is decomposed into five reduced sub-frameworks. Clearly, 
$$E=\{(A,0.8),(B,0.1),(C,0.8),(D,0.1),(E,0.8),(F,0.1)\}$$ is the grounded fuzzy extension of $\mathcal{F}$.
\end{exmp}

\begin{exmp}\label{Illustrating Example 2}
Consider an FAF $\mathcal{F}=\langle \mathcal{A},\mathcal{R}\rangle$ depicted in Figure \ref{Figure: Illustrating Example 2}, where
\begin{align*}
    \mathcal{A}=&  \{A_{0.8}, B_{0.8}, C_{0.6}, D_{0.9}, E_{0.8}, F_{0.8}, G_{1.0}, H_{1.0}, I_{1.0}\}\\
    \mathcal{R}=&  \{A\xrightarrow{1.0}B, B\xrightarrow{1.0}A, B\xrightarrow{1.0}C, C\xrightarrow{1.0}D, D\xrightarrow{1.0}E, E\xrightarrow{1.0}F, F\xrightarrow{1.0}C, C\xrightarrow{1.0}E, \\
    & D\xrightarrow{1.0}G, E\xrightarrow{1.0}I, G\xrightarrow{1.0}I, I\xrightarrow{1.0}G, G\xrightarrow{1.0}H, H\xrightarrow{1.0}G, I\xrightarrow{1.0}H\}
\end{align*}

\begin{figure}[htp]
\begin{center}
\subfloat[Before decomposition]{
\scalebox{0.75}{
\begin{tikzpicture}
[roundnode/.style={circle, draw=black!100, fill=white!5, thick, minimum size=2mm},
rectanglenode/.style={rectangle, draw=black!0, fill=white!5, thick, minimum size=0mm},
]
\node[roundnode]      (1) at(-4.5,0)      {$A: {0.8}$};
\node[roundnode]      (2) at(-1.5,0)      {$B: {0.8}$};
\node[roundnode]      (3) at(1.5,0)      {$C: {0.6}$};
\node[roundnode]      (4) at(4.5,0)      {$D: {0.9}$};
\node[roundnode]      (5) at(4.5,-2.5)      {$E: {0.8}$};
\node[roundnode]      (6) at(1.5,-2.5)      {$F: {0.8}$};
\node[roundnode]      (7) at(7.5,0)      {$G: {1.0}$};
\node[roundnode]      (8) at(10.5,0)      {$H: {1.0}$};
\node[roundnode]      (9) at(7.5,-2.5)      {$I: {1.0}$};

\draw[thick, ->]  (2)--(3);
\draw[thick, ->]  (3)--(4);
\draw[thick, ->]  (4)--(5);
\draw[thick, ->]  (5)--(6);
\draw[thick, ->]  (6)--(3);
\draw[thick, ->]  (3)--(5);
\draw[thick, ->]  (4)--(7);
\draw[thick, ->]  (5)--(9);
\draw[thick, ->]  (9)--(8);
\draw[thick, ->]  (-3.72,0.07)--(-2.28,0.07);
\draw[thick, <-]  (-3.72,-0.07)--(-2.28,-0.07);
\draw[thick, ->]  (8.28,0.07)--(9.7,0.07);
\draw[thick, <-]  (8.29,-0.07)--(9.7,-0.07);
\draw[thick, ->]  (7.57,-0.78)--(7.57,-1.78);
\draw[thick, <-]  (7.43,-0.78)--(7.43,-1.78);
\node at(3,-3.5) {};
\end{tikzpicture}
}
}

\subfloat[Intermediate decomposition]{
\scalebox{0.75}{
\begin{tikzpicture}
[roundnode/.style={circle, draw=black!100, fill=white!5, thick, minimum size=2mm},
rectanglenode/.style={rectangle, draw=black!0, fill=white!5, thick, minimum size=0mm},
]
\node[roundnode]      (1) at(-4.5,0)      {$A: {0.8}$};
\node[roundnode]      (2) at(-1.5,0)      {$B: {0.8}$};
\node[roundnode]      (3) at(1.5,0)      {$C: {0.2}$};
\node[roundnode]      (4) at(4.5,0)      {$D: {0.9}$};
\node[roundnode]      (5) at(4.5,-2.5)      {$E: {0.8}$};
\node[roundnode]      (6) at(1.5,-2.5)      {$F: {0.8}$};
\node[roundnode]      (7) at(7.5,0)      {$G: {0.2}$};
\node[roundnode]      (8) at(10.5,0)      {$H: {1.0}$};
\node[roundnode]      (9) at(7.5,-2.5)      {$I: {0.8}$};

\draw[thick, ->]  (3)--(4);
\draw[thick, ->]  (4)--(5);
\draw[thick, ->]  (5)--(6);
\draw[thick, ->]  (9)--(8);
\draw[thick, ->]  (-3.72,0.07)--(-2.28,0.07);
\draw[thick, <-]  (-3.72,-0.07)--(-2.28,-0.07);
\draw[thick, ->]  (8.28,0.07)--(9.7,0.07);
\draw[thick, <-]  (8.29,-0.07)--(9.7,-0.07);
\node at(-3,-1.2) {$\mathcal{F}_1$};
\node at(0.25,-1.2) {$\mathcal{F}_{2}$};
\node at(6.25,-1.2) {$\mathcal{F}_{3}$};
\draw[dashed, thick] (-5.6,1.2) rectangle (-0.4,-1.5);
\draw[dashed, thick] (-0.2,1.2) rectangle (5.6,-3.7);
\draw[dashed, thick] (5.8,1.2) rectangle (11.6,-3.7);

\node at(3,-4.) {};
\end{tikzpicture}
}
}

\subfloat[After decomposition]{
\scalebox{0.75}{
\begin{tikzpicture}
[roundnode/.style={circle, draw=black!100, fill=white!5, thick, minimum size=2mm},
rectanglenode/.style={rectangle, draw=black!0, fill=white!5, thick, minimum size=0mm},
]
\node[roundnode]      (1) at(-4.5,0)      {$A: {0.8}$};
\node[roundnode]      (2) at(-1.5,0)      {$B: {0.8}$};
\node[roundnode]      (3) at(1.5,0)      {$C: {0.2}$};
\node[roundnode]      (4) at(4.5,0)      {$D: {0.8}$};
\node[roundnode]      (5) at(4.5,-2.5)      {$E: {0.2}$};
\node[roundnode]      (6) at(1.5,-2.5)      {$F: {0.8}$};
\node[roundnode]      (7) at(7.5,0)      {$G: {0.2}$};
\node[roundnode]      (8) at(10.5,0)      {$H: {0.2}$};
\node[roundnode]      (9) at(7.5,-2.5)      {$I: {0.8}$};
\draw[thick, ->]  (-3.72,0.07)--(-2.28,0.07);
\draw[thick, <-]  (-3.72,-0.07)--(-2.28,-0.07);

\node at(-3,-1.2) {$\mathcal{F}_1$};
\node at(1.5,-1.2) {$\mathcal{F}_{2_1}$};
\node at(4.5,-1.2) {$\mathcal{F}_{2_2}$};
\node at(4.5,-3.7) {$\mathcal{F}_{2_3}$};
\node at(1.5,-3.7) {$\mathcal{F}_{2_4}$};
\node at(7.5,-1.2) {$\mathcal{F}_{3_{2_2}}$};
\node at(7.5,-3.7) {$\mathcal{F}_{3_1}$};
\node at(10.5,-1.2) {$\mathcal{F}_{3_{2_1}}$};
\node at(3,-3.5) {};
\end{tikzpicture}
}
}
\end{center}
\caption{SCC-recursive decomposition in Example \ref{Illustrating Example 2}}
\label{Figure: Illustrating Example 2}
\end{figure}

In this example, we compute a preferred fuzzy extension $E$ of $\mathcal{F}$.
First, $\mathcal{F}$ can be partitioned into three SCCs: $S_1=\{A,B\}$, $S_2=\{C,D,E,F\}$, $S_3=\{G,H,I\}$.
Subsequently, we compute $E$ following the sequence of these SCCs.

For the initial SCC $S_1=\{A,B\}$, it is easy to see that 
$$D_{\mathcal{F}}(S_1,E)=R_{\mathcal{F}}(S_1,E)=\{(A,0.8),(B,0.8)\},$$ 
yielding the first sub-framework by restricting $\mathcal{F}$ to $R_{\mathcal{F}}(S_1,E)$:
$$\mathcal{F}_1=\mathcal{F}\!\downarrow_{R_{\mathcal{F}}(S_1,E)}=\langle \{A_{0.8},B_{0.8}\},\{A\xrightarrow{1.0}B, B\xrightarrow{1.0}A\}\rangle.$$
Since $|SCCS_{\mathcal{F}_1}|=1$, the base function $\mathcal{BF}_{\mathcal{PR}}$ is invoked.
There are many results for selection, which potentially lead to different decomposition.
We choose $E|_{S_1}=\{(A,0.2),(B,0.8)\}$ for illustration.

Next, we consider the SCC $S_2=\{C,D,E,F\}$. 
Given that $S_1$ attacks $S_2$ and $E|_{S_1}=\{(A,0.2),(B,0.8)\}$,
we have 
    $$R_{\mathcal{F}}(S_2,E)=D_{\mathcal{F}}(S_2,E)=\{(C,0.2),(D,0.9),(E,0.8),(F,0.8)\}.$$
Evidently, the attacks from $(C,0.2)$ to $(E,0.8)$ and $(F,0.8)$ to $(C,0.2)$ are always tolerable, and therefore according to Definition \ref{restriction of FAF}, the restriction of $\mathcal{F}$ to  $R_{\mathcal{F}}(S_2,E)$ is 
$$\mathcal{F}_2=\mathcal{F}\!\downarrow_{{R_{\mathcal{F}}}(S_2,E)}=\langle\{C_{0.2}, D_{0.9},E_{0.8},F_{0.8}\},\{C\xrightarrow{1.0}D, D\xrightarrow{1.0}E, E\xrightarrow{1.0}F\}\rangle.$$
Then $\mathcal{F}_2$ can be recursively decomposed into four SCCs: $S_{2_1}=\{C\}$, $S_{2_2}=\{D\}$, $S_{2_3}=\{E\}$, $S_{2_4}=\{F\}$.
It can be concluded that 
\begin{itemize}
\item for $S_{2_1}$ 
\begin{itemize}
    \item $R_{\mathcal{F}_2}(S_{2_1},E|_{S_2})=\{(C,0.2)\}$;
    \item $D_{\mathcal{F}_2}(S_{2_1},E|_{S_2})\cap D_{\mathcal{F}}(S_2,E)=\{(C,0.2)\}$;
    \item $\mathcal{F}_{2_1}=\mathcal{F}_2\!\downarrow_{\mathcal{R}_{\mathcal{F}_2}(S_{2_1},E|_{S_2})}=\langle\{C_{0.2}\},\varnothing\rangle$;
    \item $E|_{S_{2_1}}=\{(C,0.2)\}$.
    \end{itemize}
\item for $S_{2_2}$
\begin{itemize}
    \item $R_{\mathcal{F}_2}(S_{2_2},E|_{S_2})=\{(D,0.8)\}$;
    \item $D_{\mathcal{F}_2}(S_{2_2},E|_{S_2})\cap D_{\mathcal{F}}(S_2,E)=\{(D,0.8)\}$;
    \item  $\mathcal{F}_{2_2}=\mathcal{F}_2\!\downarrow_{\mathcal{R}_{\mathcal{F}_2}(S_{2_2},E|_{S_2})}=\langle\{D_{0.8}\},\varnothing\rangle$;
    \item $E|_{S_{2_2}}=\{(D,0.8)\}$.
    \end{itemize}
\item for $S_{2_3}$
\begin{itemize}
    \item $R_{\mathcal{F}_2}(S_{2_3},E|_{S_2})=\{(E,0.2)\}$;
    \item $D_{\mathcal{F}_2}(S_{2_3},E|_{S_2})\cap D_{\mathcal{F}}(S_2,E)=\{(E,0.2)\}$;
    \item  $\mathcal{F}_{2_3}=\mathcal{F}_2\!\downarrow_{\mathcal{R}_{\mathcal{F}_2}(S_{2_3},E|_{S_2})}=\langle\{E_{0.2}\},\varnothing\rangle$;
    \item $E|_{S_{2_3}}=\{(E,0.2)\}$.
    \end{itemize}
\item for $S_{2_4}$
\begin{itemize}
    \item $R_{\mathcal{F}_2}(S_{2_4},E|_{S_2})=\{(F,0.8)\}$;
    \item $D_{\mathcal{F}_2}(S_{2_4},E|_{S_2})\cap D_{\mathcal{F}}(S_2,E)=\{(F,0.8)\}$;
    \item  $\mathcal{F}_{2_4}=\mathcal{F}_2\!\downarrow_{\mathcal{R}_{\mathcal{F}_2}(S_{2_4},E|_{S_2})}=\langle\{F_{0.8}\},\varnothing\rangle$;
    \item $E|_{S_{2_4}}=\{(F,0.8)\}$.
    \end{itemize}
\end{itemize}
Consequently, $E|_{S_2}=\{(C,0.2), (D,0.8),$ $(E,0.2), (F,0.8)\}$.

As far as the SCC $S_3$ is concerned, from that $S_{2}$ attack $S_3$ and $E|_{S_2}=\{(C,0.2), (D,0.8),$ $(E,0.2), (F,0.8)\}$, we derive that 
   $$R_{\mathcal{F}}(S_3,E)=D_{\mathcal{F}}(S_3,E)=\{(G,0.2),(H,1.0),(I,0.8)\}.$$
Since the attacks between $(G,0.2)$ and $(I,0.8)$ are always tolerable, according to Definition \ref{restriction of FAF}, the restriction of $\mathcal{F}$ to  $R_{\mathcal{F}}(S_3,E)$ is 
$$\mathcal{F}_3=\mathcal{F}\!\downarrow_{{R_{\mathcal{F}}(S_3,E)}}=\langle\{G_{0.2},H_{1.0},I_{0.8}\},\{G\xrightarrow{1.0}H,H\xrightarrow{1.0}G,I\xrightarrow{1.0}H\}\rangle.$$
Similarly, $\mathcal{F}_3$ can be recursively decomposed into two SCCs: $S_{3_1}=\{I\}$, $S_{3_2}=\{G,H\}$.
\begin{itemize}
\item For $S_{3_1}$
\begin{itemize}
    \item $R_{\mathcal{F}_3}(S_{3_1},E|_{S_3})=\{(I,0.8)\}$;
    \item $D_{\mathcal{F}_3}(S_{3_1},E|_{S_3})\cap D_{\mathcal{F}}(S_3,E)=\{(I,0.8)\}$;
    \item  $\mathcal{F}_{3_1}=\mathcal{F}_3\!\downarrow_{\mathcal{R}_{\mathcal{F}_3}(S_{3_1},E|_{S_3})}=\langle\{I_{0.8}\},\varnothing\rangle$;
    \item $E|_{S_{3_1}}=\{(I,0.8)\}$.
    \end{itemize}
\end{itemize}
For $S_{3_2}$, since $S_{3_1}$ attacks $S_{3_2}$ and $E|_{S_{3_1}}=\{(I,0.8)\}$, we obtain
    $$R_{\mathcal{F}_3}(S_{3_2},E|_{S_3})=D_{\mathcal{F}_3}(S_{3_2},E|_{S_3})=\{(G,0.2),(H,0.2)\}.$$
Clearly the attacks between $(G,0.2)$ and $(H,0.2)$ are always tolerable, therefore, the restriction of $\mathcal{F}_3$ to $R_{\mathcal{F}}(S_{3_2},E|_{S_3})$ is $$\mathcal{F}_{3_2}=\mathcal{F}_3\!\downarrow_{{R_{\mathcal{F}_3}}(S_{3_2},E|_{S_3})}=\langle\{G_{0.2},H_{0.2}\},\varnothing\rangle.$$
Then $\mathcal{F}_{3_2}$ can be recursively partitioned into $S_{3_{2_1}}=\{H\}$ and $S_{3_{2_2}}=\{G\}$.
Similar to the above analysis, we derive that 
\begin{itemize}
\item for $S_{3_{2_1}}$
\begin{itemize}
    \item $R_{\mathcal{F}_{3_2}}(S_{3_{2_1}},E|_{S_{3_2}})=\{(H,0.2)\}$;
    \item $D_{\mathcal{F}_{3_2}}(S_{3_{2_1}},E|_{S_{3_2}})\cap D_{\mathcal{F}_3}(S_{3_1},E|_{S_3})\cap D_{\mathcal{F}}(S_3,E)=\{(H,0.2)\}$;
    \item $\mathcal{F}_{3_{2_1}}=\mathcal{F}_{3_2}\downarrow_{\mathcal{R}_{\mathcal{F}_{3_2}}(S_{3_{2_1}},E|_{S_{3_2}})}=\langle\{H_{0.2}\},\varnothing\rangle$;
    \item $E|_{S_{3_{2_1}}}=\{(H,0.2)\}$.
    \end{itemize}
\item for $S_{3_{2_2}}$
    \begin{itemize}
    \item $R_{\mathcal{F}_{3_2}}(S_{3_{2_2}},E|_{S_{3_2}})=\{(G,0.2)\}$;
    \item $D_{\mathcal{F}_{3_2}}(S_{3_{2_2}},E|_{S_{3_2}})\cap D_{\mathcal{F}_3}(S_{3_1},E|_{S_3})\cap D_{\mathcal{F}}(S_3,E)=\{(G,0.2)\}$;
    \item $\mathcal{F}_{3_{2_2}}=\mathcal{F}_{3_2}\downarrow_{\mathcal{R}_{\mathcal{F}_{3_2}}(S_{3_{2_2}},E|_{S_{3_2}})}=\langle\{G_{0.2}\},\varnothing\rangle$;
    \item $E|_{S_{3_{2_2}}}=\{(G,0.2)\}$.
    \end{itemize}
\end{itemize}
Consequently, $E|_{S_3}=\{(G,0.2),(H,0.2),(I,0.8)\}$.

As a result, the combination fuzzy extension 
$$E=\{(A,0.2),(B,0.8),(C,0.2),(D,0.8),(E,0.2),(F,0.8),(G,0.2),(H,0.2),(I,0.8)\}$$  
is a preferred fuzzy extension of $\mathcal{F}$.
\end{exmp}

\section{Discussion and Conclusion}\label{Discussion and Conclusion}

SCC-recursiveness was proposed in \cite{baroni2005scc} as a powerful schema for characterizing semantics through the decomposition of AF along SCCs.
This schema has been extensively studied in the literature.

Firstly, it has proven useful in developing algorithms for solving semantics.
Cerutti et al. designed an SCC-recursive algorithm for computing preferred semantics in \cite{cerutti2014scc} and further exploited the parallel computation in \cite{cerutti2015exploiting}.
In \cite{Baroni2014On}, Baroni et al. proposed an incremental computation algorithm for solving semantics in the dynamics of AF based on the schema.

Secondly, this schema has facilitated the exploration of new semantics.
In \cite{baroni2005scc}, Baroni et al. proposed \emph{CF2} and \emph{AD2} semantics by incorporating the concept of \emph{conflict-freeness} and \emph{admissibility} with the SCC-recursive schema.
In \cite{dvovrak2016stage}, Dvo{\v{r}}{\'a}k and Gaggl proposed \emph{stage2} semantics by combining stage semantics with the schema.

Thirdly, the feasibility of SCC-recursiveness to various semantics has attracted considerable attention.
In \cite{villata2011attack}, Villata et al. proposed the so-called \emph{attack semantics} and defined an SCC-recursive schema for this semantics using \emph{attack labelings}.
In \cite{Dauphin2020principle}, Dauphin et al. demonstrated that the SCC-recursive schema is inapplicable to \emph{weakly admissible}, \emph{weakly complete} and \emph{weakly grounded semantics}, 
whereas Dvo{\v{r}}{\'a}k et al. confirmed its applicability to \emph{weakly preferred semantics} in \cite{dvovrak2022recursion}.

Finally, SCC-recursiveness has also been extended to various qualitative frameworks.
In \cite{baumann2017study}, Baumann and Spanring investigated it in \emph{unrestricted AF}.
In \cite{yu2021principle}, Yu et al. examined it in \emph{abstract agent AF}. 
In \cite{gaggl2021decomposition}, Gaggl et al. studied it in \emph{abstract dialectical frameworks}.
In \cite{dvovrak2024principles}, Dvo{\v{r}}{\'a}k et al. explored it in \emph{AF with collective attacks}.

Despite substantial contributions in the literature, the exploration of SCC-recursiveness in quantitative settings has been relatively neglected.
In this paper, we demonstrated that SCC-recursiveness can be applied to characterize fuzzy extension semantics in fuzzy AF.
To achieve this, we tailored the existing SCC-recursive schema, enabling the characterization of fuzzy extension semantics---including \emph{admissible}, \emph{complete}, \emph{grounded} and \emph{preferred}---through the recursive decomposition of an FAF along its SCCs.
Our contributions are twofold.
Theoretically, we showed that SCC-recursiveness provides an alternative approach to characterize fuzzy extension semantics, offering a deep understanding and better insight into these semantics.
Practically, we provided a sound and complete algorithm for computing fuzzy extension semantics.
As illustrated by examples, this algorithm naturally reduces computational efforts when dealing with a large number of SCCs.

Future research can be approached in several ways. 
First, it is worth investigating the development of specific algorithms for computing fuzzy extension semantics based on the SCC-recursive schema. 
Second, utilizing the schema to explore new semantics in fuzzy AF offers significant potential.
Third, investigating SCC-recursiveness in other quantitative settings, such as probabilistic AF \cite{hunter2021probabilistic}, is a desirable endeavor.

\bibliographystyle{plain}
\bibliography{Reference}

\end{document}